\newcommand{\E}{\mathbb{E}}
\newcommand{\R}{\mathbb{R}}
\newcommand{\T}{\mathbb{T}}
\newcommand{\OO}{\mathbb{O}}
\newcommand{\Fcal}{\mathcal F}
\newcommand{\Mcal}{\mathcal M}
\newcommand{\Hcal}{\mathcal H}
\newcommand{\Tcal}{\mathcal T}
\newcommand{\Ocal}{\mathcal O}
\newcommand{\Acal}{\mathcal{A}}
\newcommand{\Bcal}{\mathcal{B}}
\newcommand{\Scal}{\mathcal{S}}
\newcommand{\RR}{\mathbb{R}}
\newcommand{\Dcal}{\mathcal{D}}
\newcommand{\inner}[1]{\left\langle#1\right\rangle}
\newcommand{\bfa}{\mathbf{a}}
\renewcommand{\P}{\mathbb{P}}
\newcommand{\Vcal}{\mathcal{V}}
\newcommand{\bra}[1]{\left[#1\right]}
\newcommand{\pa}[1]{\left(#1\right)}
\newcommand{\wh}{\widehat}
\newcommand{\bfb}{\mathbf{b}}
\newcommand{\B}{\mathbf{B}}
\newcommand{\diag}{\mathrm{diag}}
\newcommand{\epsmle}{\varepsilon_{\textrm{MLE}}}
\newcommand{\epsapprox}{\varepsilon_{\textrm{approx}}}
\newcommand{\Bhat}{\wh{\B}}
\newcommand{\bhat}{\wh{\bfb}}
\newcommand{\offs}{C_{\textrm{eff},1}}
\newcommand{\offm}{C_{\textrm{eff},m}}
\newcommand{\offt}{\widetilde{C}_{\textrm{eff},m}}
\newcommand{\epsmlet}{\widetilde{\varepsilon}_{\textrm{MLE}}}
\newcommand{\cf}{C_{\Ocal}}
\newcommand{\cfm}{C_{\Fcal}}
\newcommand{\ops}{\OO_{h}^{\dagger,w}}
\newcommand{\SigH}{\Sigma_{\Hcal,h}}
\newcommand{\SigF}{\Sigma_{\Fcal,h}}
\newcommand{\SigO}{\Sigma_{\Ocal,h}}
\newcommand{\sigmin}{\sigma_{\min}}
\newcommand{\CH}{C_{\Hcal}}
\newcommand{\bS}{\bfb_{\Scal}}
\newcommand{\CA}{C_{\Acal}}
\newcommand{\outmat}{U_{\Fcal,h}}
\newcommand{\outmatj}{U_{\Fcal,j+1}}
\newcommand{\whoutmatj}{\wh{U}_{\Fcal,j+1}}
\newcommand{\outmatf}{U_{\Fcal,1}}
\newcommand{\outmatp}{U_{\Fcal,h+1}}
\newcommand{\mps}{U_{\Fcal,h}^{\dagger,w}}
\newcommand{\mpsj}{U_{\Fcal,j+1}^{\dagger,w}}
\newcommand{\whmpsj}{\wh{U}_{\Fcal,j+1}^{\dagger,w}}
\newcommand{\bbfu}{\bar{\mathbf{u}}}
\newcommand{\tsigfh}{\Sigma^{\mathbf{p}_h}_{\Fcal,h}}
\newcommand{\tsigfr}{\Sigma^{R,\mathbf{p}_h}_{\Fcal,h}}
\newcommand{\bfp}{\mathbf{p}}
\newcommand{\Prr}{\operatorname{\Pr}}
\newcommand{\tcf}{\widetilde{C}_{\Fcal}}
\newcommand{\tB}{\widetilde{\mathbf{B}}}
\newcommand{\rz}{Z_h^{R,\bfp_h}}
\theoremstyle{plain}
\newtheorem{theorem}{Theorem}
\newtheorem{lemma}[theorem]{Lemma}
\theoremstyle{definition}
\newtheorem{definition}[theorem]{Definition}
\newtheorem{assumption}{Assumption}
\theoremstyle{remark}
\title{Statistical Tractability of Off-policy Evaluation of History-dependent Policies in POMDPs} 
\author{Yuheng Zhang \& Nan Jiang  \\
University of Illinois Urbana-Champaign\\
\texttt{\{yuhengz2,nanjiang\}@cs.illinois.edu}}
\begin{document}

\maketitle

\begin{abstract}
We investigate off-policy evaluation (OPE), a central and fundamental problem in  reinforcement learning (RL), in the challenging setting of 
Partially Observable Markov Decision Processes (POMDPs) with large observation spaces. Recent works of \citet{uehara2023future, zhang2024curses} developed a \textit{model-free} framework and identified important coverage assumptions (called \textit{belief} and \textit{outcome} coverage) that enable accurate OPE of \textit{memoryless} policies with polynomial sample complexities, but handling more general target policies that depend on the entire observable history remained an open problem. In this work, we prove information-theoretic hardness  for model-free OPE of history-dependent policies in several settings, characterized by additional assumptions imposed on  the behavior policy (memoryless vs.~history-dependent) and/or the state-revealing property of the POMDP  (single-step vs.~multi-step revealing). We further show that some  hardness can be circumvented by a natural \textit{model-based} algorithm---whose analysis has surprisingly eluded the literature despite the algorithm's simplicity---demonstrating provable separation between model-free and model-based OPE in POMDPs.
\end{abstract}

\section{Introduction}
Off-policy evaluation (OPE) aims to evaluate a target policy $\pi_e$ using an offline dataset collected by a different behavior policy $\pi_b$. The problem plays a crucial role in reinforcement learning (RL), and is particularly relevant to real-world scenarios where policies need to be properly evaluated before online deployment \citep{murphy2003optimal,ernst2006clinical, mandel2014offline, bottou2013counterfactual, chapelle2014simple, theocharous2015ad}.

Efficient OPE requires the behavior policy $\pi_b$ to satisfy certain coverage assumptions with respect to the target policy $\pi_e$. 
In the setting of Markov Decision Processes (MDPs), it is well established that a bounded state-action density ratio between $\pi_e$ and $\pi_b$ suffices for polynomial sample-complexity bounds; see \citet{uehara2022review, jiang2024offline} for surveys and tutorials on the topic. However, the Markov assumption, that the immediate observation is a sufficient statistic of history, can be   restrictive in scenarios where the state is latent and unobservable to the agent, as is often the case in many real-world applications.

In this paper, we study OPE in non-Markov environments modelled as partially observable MDPs (POMDPs),\footnote{When we refer to OPE in POMDPs, we mean \textit{unconfounded} POMDPs, where the behavior policy only depends on the observable variables and not the latent state. There is also research on OPE in confounded POMDPs, where the behavior policy $\pi_b$ depends on (\textit{only}) the latent state~\citep{shi2022minimax,bennett2024proximal}; see \citet{zhang2024curses} for further discussions on the distinction between the two settings.} where the observation space is  large and demands the use of function approximation. In POMDPs, the agent only has access to observations rather than the latent state, and the next observation may depend on the entire history of observation-action sequences (or simply, the history). A common approach to apply MDP techniques is to treat the history as the state, thereby reducing a POMDP to a history-based MDP. However, under this conversion, the state-action density ratio becomes the density ratio of the entire observation-action sequence, which grows exponentially with the horizon length. 

To address this issue, a recent line of research~\citep{uehara2023future,zhang2024curses} has proposed model-free methods for OPE in POMDPs with large observation spaces. 
\citet{zhang2024curses} identify two novel coverage assumptions for OPE in POMDPs, belief and outcome coverage, and demonstrate that their algorithm achieves polynomial sample complexity under these assumptions. However, they focus  on evaluating \textit{memoryless} target policies $\pi_e$, which ignore history and depend only on the current observation. Extension to history-dependent $\pi_e$ exists, but the guarantees quickly deteriorate when the history window that $\pi_e$ depends on has a nontrivial length \citep[Appendix C]{uehara2023future}. This motivates us to study the following question:

\begin{center} \it When can we achieve polynomial sample complexity for OPE of history-dependent target policies? \end{center}

\begin{table}[t]
\centering
\caption{Summary of whether $\mathrm{poly}(H,\log (|\Mcal|/\delta),\epsilon, \CA, C_{\square},\CH)$ (c.f.~Theorem~\ref{thm:memoryless}) complexity is achievable in different settings, where $C_{\square}$ is either $\cf$ or $\cfm$ depending on whether single-step or multi-step revealing is assumed. ``MF'' and ``MB'' stand for model-free (Definition~\ref{def:model-free}) and model-based (Section~\ref{sec:main}), respectively. 
``\textcolor{green}{\ding{51}}" indicates positive results, and ``\textcolor{red}{\ding{55}}" indicates information-theoretic hardness. The setting is the easiest in the top-left corner, and becomes harder in the right or down direction. Therefore, the hardness of MF in Row 2 automatically implies those in  Row 3. The bottom-right corner for MB is an open problem which we conjecture to be intractable.
\label{tab:summary}} \vspace{.5em}
\begin{tabular}{c|c|c}
\hline
\textbf{Policy Types}                                     & \textbf{Single-step Revealing}               & \textbf{Multi-step Revealing}                                \\ \hline
Memoryless $\pi_b$ \&  $\pi_e$                 & \multicolumn{2}{c}{MF: \textcolor{green}{\ding{51}} \citep{zhang2024curses}, MB: \textcolor{green}{\ding{51}} (Theorem \ref{thm:multi}) }              \\ \hline
Memoryless $\pi_b$ \& History-dependent $\pi_e$             & \multicolumn{2}{c}{MF: \textcolor{red}{\ding{55}} (Theorem \ref{thm:hardness}), MB: \textcolor{green}{\ding{51}} (Theorem \ref{thm:multi}) }             \\ \hline
History-dependent $\pi_b$ \&   $\pi_e$               & MF: \textcolor{red}{\ding{55}}  , MB: \textcolor{green}{\ding{51}} (Theorem \ref{thm:single})               & MF: \textcolor{red}{\ding{55}} , MB: ?          \\ \hline
\end{tabular}
\end{table}

We investigate the question in a range of concrete settings, and the answer turns out to be more complex than a simple yes and no. These settings are defined by variations along several dimensions:
\begin{itemize}[leftmargin=*]
\item \textbf{Model-free vs.~model-based algorithms}~~ The algorithms in \citet{uehara2023future, zhang2024curses} are \textit{model-free}, in the sense that the algorithm only queries $\pi_e$ on histories in the offline dataset. Under this rather broad definition (Definition 3 in \citet{zhang2024curses}), we show that \textbf{\textit{no model-free algorithms can handle general history-dependent target policies}}, even if we impose additional assumptions to make the problem easier in other dimensions (see below). This motivates us to also consider model-based algorithms that fit a POMDP model from data, which circumvent the hardness as they query $\pi_e$ on model-generated synthetic trajectories. 
\item \textbf{Single-step vs.~multi-step (outcome) revealing} The outcome coverage condition identified by \citet{zhang2024curses} asserts that the future observation-action sequences can probabilistically decode the latent state (Assumption 9 in \citet{zhang2024curses}). 
A stronger version of the condition is that the immediate observation suffices, which corresponds to a standard (single-step) ``revealing'' assumption commonly made in online RL for POMDPs~\citep{liu2022partially}. 
\item \textbf{Memoryless vs.~history-dependent $\pi_b$}~~ Another dimension is whether $\pi_b$ is also history dependent. In the literature, it has been reported that a history-dependent $\pi_b$ often makes it difficult to infer POMDP dynamics from data, and a memoryless $\pi_b$ makes it easier to do so \citep{kwon2024rl}. 
\end{itemize}

Our findings are summarized in Table~\ref{tab:summary}. With either relaxation (single-step revealing \textit{or} memoryless $\pi_b$), a simple Maximum Likelihood Estimation (MLE)-based model-based algorithm achieves desired guarantees, where all model-free algorithms must suffer  hardness. To our best knowledge, these results are the first polynomial sample-complexity bound for evaluating history-dependent target policies under coverage assumptions, and demonstrate a formal separation between model-based and model-free OPE in POMDPs. 

\section{Preliminaries}
\paragraph{Notation.} For a vector $\mathbf{a}$, we use $\diag(\mathbf{a})$ to denote the diagonal matrix with $\mathbf{a}$ as the diagonal and use $[\mathbf{a}]_i$ to denote its $i$-th element. We use $\mathbf{e}_i$ to denote the basis vector with the $i$-th element being one. For a positive integer $n$, we use $[n]$ to denote the set $\{1,2,\cdots,n\}$. For a matrix $M$, we use $\sigma_{\min}(M)$ and
$M^\dagger$ to denote its minimum singular value and pseudo-inverse respectively.  The $ij$ entry of matrix $M$ is denoted as $[M]_{ij}$ and the $i$-th row of $M$ is denoted as $[M]_{i,:}$. The $L_1$ norm of matrix $M$ is $\|M\|_1=\sup_{\mathbf{x} \ne \mathbf{0}}\frac{\|M\mathbf{x}\|_1}{\|\mathbf{x}\|_1}$.
\paragraph{POMDP Setup.} We use tuple $\inner{H,\Scal=\bigcup_{h=1}^H \Scal_h,\Acal,\Ocal=\bigcup_{h=1}^H \Ocal_h,R,\mathbb{O},\mathbb{T}, d_1}$ to specify a finite-horizon POMDP. Here $H$ is the length of horizon; $\Scal_h$ is the state space at step $h$; $\Acal$ is the action space with $|\Acal|=A$; $\Ocal_h$ is the observation space at step $h$ with $|\Ocal_h|=O$; $R:\Ocal \rightarrow [0,1]$ is the reward function; $\T=\{\T_h\}_{h \in [H-1]}$ is the collection of transition dynamics where $\T_h:\Scal_{h} \times \Acal \rightarrow \Scal_{h+1}$; $\OO=\{\OO_h\}_{h \in [H]}$ is the collection of emission dynamics where $\OO_h:\Scal_{h} \rightarrow \Ocal_h$; $d_1 \in \Delta(\Scal_1)$ is the distribution of initial state $s_1$. All state, action, and observation spaces are finite and discrete. However, since the observation space can be very rich, the cardinality $O$ may be arbitrarily large. Therefore, we aim to obtain sample complexity bounds that avoid any explicit dependence on $O$. 

At the beginning of each episode in the POMDP, an initial state $s_1$ drawn from $d_1$. At each step $h$, the decision-making agent observes $o_h \sim \OO_h(\cdot \mid s_h)$, along with the reward $r_h(o_h)$, and then takes an action $a_h$. After this, the environment transitions to the next state $s_{h+1} \sim \T_h(\cdot \mid s_h,a_h)$, and the episode terminates after $a_H$. Note that the states $s_{1:H}$ are latent and unobservable to the agent.

\paragraph{History and Belief State.}
We use $\tau_h=(o_1,a_1,\cdots,o_h,a_h) \in \Tcal_h:=\prod_{h'=1}^h (\Ocal_h \times \Acal)$ to denote the historical observation-action sequence up to step $h$. Given history $\tau_h$, we define $\mathbf{b}_{\Scal}(\tau_h) \in \mathbb{R}^{|\Scal_{h+1}|}$ as its belief state vector and $[\mathbf{b}_{\Scal}(\tau_h)]_i=\P(s_{h+1}=i \mid \tau_h)$. \footnote{The belief state is determined by the environment dynamics and is independent of the policy. Our definition is the same as  in \citet{uehara2023future, zhang2024curses}; this is slightly different from the usual definition that also includes $o_{h+1}$ as a conditional variable.}

\paragraph{Policies.} A (history-dependent) policy $\pi=\{\pi_h\}_{h \in [H]}$ where $\pi_h:\Tcal_{h-1} \times \Ocal_h \rightarrow \Delta(\Acal)$ specifies the action probability given the history $\tau_{h-1}$ and the current observation $o_h$. A \textit{memoryless} policy only depends on the current observation (i.e., $\pi_h: \Ocal_h \to \Delta(\Acal)$). We define $J(\pi)$ as the expected cumulative return under policy $\pi$: $J(\pi):=\E_{\pi}\bra{\sum_{h=1}^H R(o_h)}$. Here we use $\E_{\pi}$ to denote the expectations under policy $\pi$, $\P^\pi(\cdot)$ for the probability of an event under the same policy and $d^{\pi}_h(\cdot)$ for the marginal distribution of $s_h,a_h$ under $\pi$. 


\paragraph{The Outcome Matrix.} For any step $h \in [H]$, we use $f_h=(o_h,a_h,\cdots,o_{H-1},a_{H-1},o_{H}) \in \Fcal_h := \prod_{h'=h}^{H-1} (\Ocal_{h'} \times \Acal) \times \Ocal_{H}$ to denote the future after step $h$. For future $f_h$, we define $\mathbf{u}(f_h) \in \mathbb{R}^{|\Scal_h|}$ as its outcome vector where $[\mathbf{u}(f_h)]_i=\P^{\pi_b}(f_h \mid s_h=i)$. Then we define the outcome matrix $\outmat \in \mathbb{R}^{|\Fcal_h| \times |\Scal_h|}$ where the row indexed by $f_h$ is $\mathbf{u}(f_h)$. Note that the outcome matrix $\outmat$ depends on the behavior policy $\pi_b$, which we omit in the notation. 

\paragraph{Off-policy Evaluation (OPE).} In OPE, we aim to use an offline dataset collected by a behavior policy $\pi_b$ to estimate the expected cumulative return of a target policy $\pi_e$, which is $J(\pi_e)$. The dataset $\Dcal$ consists of $n$ data trajectories $\{(o_1^{(i)}, a_1^{(i)}, r_1^{(i)}, \ldots, o_H^{(i)}, a_H^{(i)}, r_H^{(i)}): i\in[n]\}$. Without loss of generality, we assume the reward function $R$ is known and we need to learn the emission and transition dynamics. In this paper, we focus on evaluating history-dependent $\pi_e$, which is a more general setting compared to the memoryless $\pi_e$ considered in previous works \citep{uehara2023future,zhang2024curses}. Therefore, the action $a_h$ depends on both the history $\tau_{h-1}$ and the observation $o_h$. Throughout the paper, we make the following assumption:
\begin{assumption}\label{assum:action}
We assume $\pi_b(a_h \mid \tau_{h-1},o_h)$ is known and $\max_{h,\tau_{h-1},o_h,a_h} \frac{1}{\pi_b(a_h \mid \tau_{h-1},o_h)} \le C_{\Acal}$. 
\end{assumption}
The  parameter $\CA$ will not always show up in our upper bounds due to its looseness,\footnote{See discussion of uniform vs.~policy-specific coverage in Section~\ref{sec:prior}; the policy-specific counterpart of Assumption~\ref{assum:action} is to bound $\pi_e(a_h\mid \tau_{h-1},o_h) / \pi_b(a_h \mid \tau_{h-1}, o_h)$.} but it is a useful relaxation for making meaningful comparisons across different settings of interest in Table~\ref{tab:summary}.

\paragraph{Learning Goal.} 
We consider rich and large observation spaces and want to avoid paying explicit dependence on $O$ in the sample complexity. One way of achieving so is to use importance sampling \citep{precup2000eligibility}, which avoids $O$ but pays an exponential-in-horizon quantity $O({\CA}^H)$ instead. We follow the setting of \citet{uehara2023future, zhang2024curses} and use function approximation to avoid both $O$ and exponential-in-$H$, by assuming that the learner has access to a realizable model class $\Mcal$.\footnote{The algorithms in \citet{uehara2023future, zhang2024curses} are model-free  and require value function and Bellman error classes; in Appendix~\ref{app:fdvf} we show how these classes can be constructed automatically from $\Mcal$, which is a common practice when comparing across model-based and model-free algorithms \citep{chen2019information, sun2019model}. \label{ft:model-free}} More formally, we make the following assumption throughout  (except in Section~\ref{sec:discuss}):
\begin{assumption}[Realizability] \label{assum:realizable}
We assume that the learner is given $\Mcal$, a class of POMDPs with the same $\Scal, \Acal, \Ocal, H, R$ components as $M^\star$. Furthermore, $M^\star \in \Mcal$.
\end{assumption}
Our goal is to achieve sample complexity bounds polynomial in $\log |\Mcal|, H$ as well as appropriate coverage parameters introduced in the next section.


\section{Information-theoretic Hardness of Model-free Algorithms} \label{sec:hardness}
We start by introducing the key assumptions that enable efficient OPE of \textit{memoryless} policies in prior works. Then, in Section~\ref{sec:lower}, we demonstrate via a lower bound that evaluating history-dependent $\pi_e$ is information-theoretically hard for \textit{any} model-free algorithm (Theorem \ref{thm:hardness}), motivating the investigation of model-based algorithms in Section~\ref{sec:main}. 


\subsection{Key Assumptions and Existing Results for Memoryless $\pi_e$} \label{sec:prior}
The prior work of \cite{zhang2024curses} identify two key assumptions for OPE of memoryless policies in POMDPs, as introduced below. 

\begin{assumption}[Uniform Belief Coverage] \label{assum:belief} 
Define 
$$
\textstyle \SigH= \E_{\pi_b}[\bS(\tau_{h-1})\bS(\tau_{h-1})^\top].$$ 
Assume that $1/\sigmin(\SigH) \le \CH, \forall h\in [H-1]$ for some $\CH < \infty$.
\end{assumption}

\begin{assumption}[Multi-step Outcome Revealing] \label{assum:multi} 
Define\footnote{The definitions here are based on $\outmat$, calculated using the dynamcis of $M^\star$. Later we will also be interested in variant of this assumption, where all quantities are replaced by their counterparts calculated using the dynamics of some candidate model $M\in \Mcal$.}
\begin{align*} 
\Sigma_{\Fcal,h} := \outmat^\top Z_h^{-1} \outmat, \quad \text{where} \quad Z_h := \diag(\outmat \mathbf{1}_{\Scal_h}). 
\end{align*} 
Here $\diag(\cdot)$ is a diagonal matrix with its diagonal being the input vector, and $\mathbf{1}_{\Scal_h}$ is the all-1 vector with dimension $|\Scal_h|$. We assume that $\|\SigF^{-1}\|_1 \le \cfm, ~\forall h \in [H-1]$ for some $\cfm < \infty$.
\end{assumption}

Intuitively, Assumption~\ref{assum:belief} states that the belief vector $\bS(\tau_{h-1})$ spans all directions of $\R^{|\Scal_h|}$ when $\tau_{h-1}$ is generated with $\pi_b$, and similar ``linear coverage'' conditions are also found in the linear MDP literature \citep{jin2021pessimism,xiong2022nearly}.
Assumption~\ref{assum:multi} is an analogous condition but stated for the future after step $h$, and can be interpreted as that the future $f_h$ can nontrivially predict the latent state $s_h$ (thus the term ``revealing''). In fact, when $f_h$ can deterministically predict $s_h$, Assumption~\ref{assum:multi} holds with $\SigF = \mathbf{I}$ (the identity matrix) and $\cfm = 1$. 

\paragraph{Uniform vs.~Policy-specific Coverage} Both assumptions above are only properties of the behavior policy $\pi_b$, whereas the original conditions of \citet{zhang2024curses} are tighter versions that account for the properties of $\pi_e$ as well. For example, their belief coverage does not require covering all directions of $\RR^{|\Scal_h|}$, but only the direction of $[d_h^{\pi_e}(s_h=s)]_{s\in\Scal_h}$. We call the former \textit{uniform coverage} (since it works for all target policies) and the latter \textit{policy-specific coverage} (since it is specific to the $\pi_e$ under consideration). Our positive results in Section~\ref{sec:main} also depend on policy-specific coverage parameters, but they are different from the ones in \citet{zhang2024curses} due to the technical differences between model-free and model-based analyses. Moreover, some definitions (e.g., outcome coverage in \citet{zhang2024curses}, which is the policy-specific version of our Assumption~\ref{assum:multi}) do not extend to history-dependent $\pi_e$. 
Therefore, we  introduce the uniform version of the assumptions here  for a clean and fair comparison. 

We now state the result of \citet{zhang2024curses} in our setup (see Footnote~\ref{ft:model-free} for how we invoke their model-free algorithms and analyses in our model-based setup), with proof in Appendix \ref{app:fdvf}. 

\begin{theorem}[Corollary of Theorem 7 of \citet{zhang2024curses}] \label{thm:memoryless}
Under Assumptions \ref{assum:belief} and \ref{assum:multi}, assume that $\pi_b$ and $\pi_e$ are memoryless, there exists an algorithm (see Appendix~\ref{app:fdvf}) such that, with probability at least $1-\delta$,\footnote{Here we assume that the algorithm has knowledge of the value of $\cfm$ and $\CH$. The analyses can be easily adapted to the case where the precise value of $\cfm$ (or $\CH$) is unknown but a nontrivial upper bound (e.g., a constant multiple of the value) is known.} 
$
|J(\pi_e)-\wh{J}(\pi_e)| \le \epsilon 
$ with a sample complexity $n=\mathrm{poly}(H,\log (|\Mcal|/\delta),\epsilon, \CA,\cfm,\CH)$, 
where $\wh{J}(\pi_e)$ is the algorithm's estimation of $J(\pi_e)$.
\end{theorem}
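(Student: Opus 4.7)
The plan is to reduce the claim to Theorem 7 of \citet{zhang2024curses} by (i) constructing the model-free function classes their algorithm requires from the given model class $\Mcal$, and (ii) verifying that the uniform coverage conditions in Assumptions \ref{assum:belief} and \ref{assum:multi} imply the (policy-specific) coverage actually invoked by their result.

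For step (i), for each $M\in\Mcal$ I define, under $M$ together with the memoryless target $\pi_e$, a future-dependent value function (FDVF) $f^M_h:\Fcal_h\to\R$ whose value on a future $f_h$ equals the conditional expected return under $(M,\pi_e)$. Taking $\Fcal=\{f^M: M\in\Mcal\}$ immediately gives $\log|\Fcal|\le\log|\Mcal|$, and realizability is inherited from $M^\star\in\Mcal$, which produces the true FDVF. The Bellman-error witness class $\Hcal$ needed by \citet{zhang2024curses} is built analogously by considering pairs (or simple products) of models in $\Mcal$, so that $\log|\Hcal|=O(\log|\Mcal|)$. These translations are standard and follow the pattern used when comparing model-based and model-free methods \citep{chen2019information,sun2019model}; the details are carried out in Appendix \ref{app:fdvf}.

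For step (ii), Assumption \ref{assum:belief} is strictly stronger than the belief-coverage condition of \citet{zhang2024curses}: $1/\sigma_{\min}(\SigH)\le\CH$ bounds $\mathbf{v}^\top\SigH^{-1}\mathbf{v}$ uniformly in the unit direction $\mathbf{v}$, and in particular in the direction $\mathbf{v}=[d_h^{\pi_e}(s)]_{s\in\Scal_h}$ they use. Assumption \ref{assum:multi} is likewise the uniform version of the multi-step outcome-revealing condition, which directly implies the corresponding policy-specific variant. Assumption \ref{assum:action} then bounds the importance weight $\pi_e/\pi_b$ by $\CA$. Together, all coverage parameters appearing in the source theorem are controlled by polynomials in $\CH$, $\cfm$, and $\CA$.

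Plugging the constructed $\Fcal$ and $\Hcal$ into the algorithm of \citet{zhang2024curses} and invoking their Theorem 7 yields $|J(\pi_e)-\wh J(\pi_e)|\le\eps$ with $n=\mathrm{poly}(H,\log(|\Mcal|/\delta),\eps,\CA,\cfm,\CH)$. The main point to be careful about is checking that $\Fcal$ and $\Hcal$ satisfy any closedness/Bellman-completeness-style requirements implicit in the source algorithm, and that the polynomial dependence on $\log|\Mcal|$ survives the construction of $\Hcal$; both are bookkeeping tasks rather than genuine technical obstacles.
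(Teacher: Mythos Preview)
Your high-level plan---build the FDVF class and the Bellman-residual witness class from $\Mcal$, then invoke Theorem 7 of \citet{zhang2024curses}---is exactly what the paper does. But two of your execution steps contain real gaps, not just bookkeeping.

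First, your description of the FDVF is wrong. An FDVF is \emph{not} ``the conditional expected return under $(M,\pi_e)$''; that would just be the Monte-Carlo return $R^+(f_h)$ and would not depend on $M$ or $\pi_e$ at all (or, under a different reading, would depend on the latent state and not on $f_h$). The FDVF is any $V_{\Fcal,h}$ satisfying the linear equation $\outmat^\top V_{\Fcal,h} = V_{\Scal,h}^{\pi_e}$ (Definition 3 in \citet{zhang2024curses}), and the canonical construction uses the weighted pseudo-inverse $\mps$. This distinction matters because the bound $\|V_{\Fcal}\|_\infty \le H\cfm$ (their Lemma 5) is precisely how $\cfm$ enters the final sample-complexity polynomial; if you don't build the FDVF via the pseudo-inverse you have no handle on its range.

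Second, you omit the pre-filtering step. The paper first restricts to $\Mcal'\subset\Mcal$ consisting of models that themselves satisfy Assumption~\ref{assum:multi}, and only then constructs $\Vcal=\{V_{\Fcal}^M: M\in\Mcal'\}$ and $\Xi=\{\Bcal_{M_1}^{\Hcal} V_{\Fcal,M_2}: M_1,M_2\in\Mcal'\}$. Without pre-filtering, some $M\in\Mcal$ could have a non-invertible or ill-conditioned $\SigF$, making its FDVF unbounded and destroying the uniform-boundedness premise of Theorem 7. This is also why the algorithm needs to know $\cfm$ (as the theorem's footnote says). Your remark that Assumption~\ref{assum:multi} on $M^\star$ ``directly implies the corresponding policy-specific variant'' is fine for the true model, but does nothing for the other candidates in $\Mcal$.

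With those two fixes---correct FDVF construction and pre-filtering---the remaining verifications (Assumptions 5, 6, and 11 of \citet{zhang2024curses}, and $|\Vcal|\le|\Mcal|$, $|\Xi|\le|\Mcal|^2$) really are bookkeeping, and the paper's short proof goes through exactly as you outlined.
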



\subsection{Hardness Results for History-dependent $\pi_e$} \label{sec:lower}
We now show that Theorem~\ref{thm:memoryless} cannot hold if we make $\pi_b$ and $\pi_e$ general history-dependent policies. In fact, the hardness is not specific to the algorithms of   \citet{uehara2023future, zhang2024curses}, but applies generally to a broad range of \textit{model-free} algorithms, as defined below. 

\begin{definition}[Model-free algorithm] \label{def:model-free}
We say that an algorithm for OPE in POMDPs is \textit{model-free}, if it only queries the action distribution of $\pi_e$ on trajectories observed in the offline dataset, i.e., the only information known about $\pi_e$ is $\pi_e(\cdot | o_1^{(i)}, a_1^{(i)}, \ldots o_{h}^{(i)})$ for all $h\in[H], i\in [n]$.
\end{definition}
This definition is satisfied by the algorithms of \citet{uehara2023future, zhang2024curses}. When it is specialized to the MDP setting, it is also satisfied by most standard algorithms considered model-free, such as importance sampling \citep{precup2000eligibility}, Fitted-Q Evaluation \citep{ernst2005tree, le2019batch, voloshin2019empirical}, and marginalized importance sampling \cite{liu2018breaking, uehara2020minimax}. The spirit of focusing on how algorithms access and process information in the input is consistent with prior works that show model-free and model-based separation from a learning-theoretic perspective \citep{chen2019information, sun2019model}.

Moreover, the hardness still holds even if we make the problem easier in two  aspects:
\begin{enumerate}[leftmargin=*]
\item We  keep $\pi_b$ memoryless (the more general setting is that $\pi_b$ is also history-dependent just as $\pi_e$). 
\item Assumption~\ref{assum:multi} states that (multi-step) future can reveal the latent state $s_h$. The online POMDP literature has a related and stronger assumption that the immediate observation $o_h$ plays the same role \citep{liu2022partially}:
\end{enumerate} 
\begin{assumption}[Single-step Outcome Revealing] \label{assum:single}
Define    
\begin{align*} 
\Sigma_{\Ocal,h} := \OO_h^\top W_h^{-1} \OO_h, \quad \text{where} \quad W_h := \diag(\OO_h \mathbf{1}_{\Scal_h}). 
\end{align*} 
We assume 
$\|\Sigma_{\Ocal,h}^{-1}\|_1 \le \cf ~\forall h \in [H-1]$ for some $\cf < \infty$. 
\end{assumption}

The interpretation of this assumption is similar to multi-step revealing Assumption~\ref{assum:multi}, but requires that the immediate observation $o_h$ (instead of the entire future as in multi-step revealing) is sufficient for making nontrivial predictions of $s_h$. Therefore, the single-step outcome revealing assumption (Assumption~\ref{assum:single}) should be treated as generally stricter than its multi-step counterpart (Assumption~\ref{assum:multi}), though a rigorous and quantitative comparison between $\cf$ and $\cfm$ is somewhat complicated and we defer the discussion to Appendix~\ref{app:single_vs_multi}. Variants of this assumption have been proposed in 
online learning in POMDPs \citep{liu2022partially}, but the  original version has a poor scalability w.r.t.~the number of observations, as pointed out by \citet{chen2022partially} and \citet{zhang2024curses}; see Example 1 of \citet{zhang2024curses} for further discussions. Our Assumption~\ref{thm:single} fixes the issue by using a similar inverse-weighting scheme as Assumption~\ref{thm:multi} and enjoys better scaling with the size of the observation space. For example, when $o_h$ can uniquely determine $s_h$, $\Sigma_{\Ocal,h}=\mathbf{I}$ and is independent of the number of observations. 


\begin{theorem}[Information-theoretic hardness of model-free algorithms] \label{thm:hardness}
In the same setup as Theorem~\ref{thm:memoryless}, if we allow $\pi_e$ to be history-dependent (but $\pi_b$ is still memoryless), no model-free algorithm can achieve the polynomial sample complexity guarantee. This still holds even if we replace $\cfm$ with the single-step revealing parameter $\cf$. 
\end{theorem}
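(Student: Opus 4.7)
The plan is to prove Theorem~\ref{thm:hardness} via a Le Cam two-point lower bound. I will exhibit a single POMDP $M^\star$ (so $|\Mcal|=1$) with a memoryless uniform $\pi_b$ satisfying Assumptions~\ref{assum:belief} and~\ref{assum:single} with $O(1)$ constants, together with two deterministic target policies $\pi_e^{(0)}$ (``always take action $0$'') and $\pi_e^{(1)}$ (identical to $\pi_e^{(0)}$ except that it deviates at a single critical step-$(H-1)$ history), whose expected returns differ by $\Omega(1)$ but which produce identical inputs to any model-free algorithm on a $\pi_b$-sampled dataset of size $n\le 2^{H-3}$ with probability at least $1/2$. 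Because Assumption~\ref{assum:single} implies Assumption~\ref{assum:multi} up to a polynomial blow-up (see Appendix~\ref{app:single_vs_multi}), this simultaneously handles both the $\cf$ and $\cfm$ variants of the statement.

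For the POMDP, I would take horizon $H$, two actions, two non-trivial latent states $\{s_0,s_1\}$, two non-trivial observations $\{A,B\}$, deterministic ``state equals last action'' transitions $s_{h+1}=s_{a_h}$, noisy symmetric emissions $\OO(A\mid s_0)=\OO(B\mid s_1)=p$ with $p=3/4$, and reward $R(A)=1$, $R(B)=0$; to cleanly handle the boundary index $h=1$, I would prepend a trivial initial singleton state with a dummy observation of reward $0$. Under $\pi_b$ uniform the belief vector reduces to $\bS(\tau_{h-1})=\mathbf{e}_{a_{h-1}}$ for $h\ge 2$ (since $s_h$ is a deterministic function of $a_{h-1}$), so $\SigH=\tfrac{1}{2}I$ and $\CH=2$; a direct diagonalization of $\OO_h^\top W_h^{-1}\OO_h$ for the symmetric bit-flip channel yields eigenvalues $1$ and $(2p-1)^2$, so $\cf=(2p-1)^{-2}$ and $\CA=2$, all $O(1)$. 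I would then define $\pi_e^{(1)}$ to agree with $\pi_e^{(0)}$ on every history except those of the form $(\cdot,0,\cdot,0,\ldots,\cdot,0,o_{H-1})$ (all previous actions zero), where it takes action $1$ at step $H-1$. Executing either policy keeps $s_h=s_0$ for $h\le H-1$, but $\pi_e^{(1)}$'s single deviation flips $s_H$ to $s_1$; direct computation gives $J(\pi_e^{(0)})=(H-1)p$ and $J(\pi_e^{(1)})=(H-2)p+(1-p)$, so the gap is $2p-1 = 1/2 = \Omega(1)$.

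The indistinguishability step is the crux. Under $\pi_b$ uniform a sampled trajectory satisfies $a_1=\cdots=a_{H-2}=0$ with probability $2^{-(H-2)}$, so by a union bound the dataset $\Dcal$ contains a history on which $\pi_e^{(0)}$ and $\pi_e^{(1)}$ disagree with probability at most $n\cdot 2^{-(H-2)}\le 1/2$ whenever $n\le 2^{H-3}$. Conditional on the complementary event (of probability at least $1/2$), every oracle query $\pi_e(\cdot\mid o_1^{(i)},a_1^{(i)},\ldots,o_h^{(i)})$ accessible to a model-free algorithm (Definition~\ref{def:model-free}) is identical under the two target policies, so the algorithm's estimate $\wh J$ has the same conditional distribution in both cases and must incur error at least $(2p-1)/2 = 1/4$ on at least one of them. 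Since $\pi_e^{(1)}$ is history-dependent (and $\pi_e^{(0)}$ is memoryless, hence also covered by the history-dependent guarantee), this rules out any bound $n=\mathrm{poly}(H,\log(|\Mcal|/\delta),1/\epsilon,\CA,C_\square,\CH)$ with $\epsilon<1/8$, $\delta<1/4$, and $C_\square\in\{\cf,\cfm\}$, forcing $n\ge 2^{\Omega(H)}$.

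The main obstacle I anticipate is crafting a POMDP that simultaneously satisfies both coverage assumptions with constants while still leaving an exponential blow-up in the action-history space that a single history-dependent deviation can exploit. The symmetric-bit-flip construction with the deterministic state-equals-last-action transition is designed precisely so that $\SigH$ collapses to a constant multiple of the identity (trivializing belief coverage) and $\Sigma_{\Ocal,h}$ has a constant condition number (trivializing single-step revealing), while the $2^{H-1}$ distinct action-histories under a memoryless $\pi_b$ force any polynomial-size dataset to miss the ``all-zero'' prefix that $\pi_e^{(1)}$ visits deterministically; packaging this carefully for randomized algorithms (by conditioning on the algorithm's internal randomness) is standard but requires bookkeeping.
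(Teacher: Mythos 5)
Your proposal is correct and follows essentially the same argument as the paper's own proof: a single fully-known POMDP ($|\Mcal|=1$), a uniform memoryless $\pi_b$, and two target policies that agree everywhere except on the all-same-action history of length $H-1$, which appears in a size-$n$ dataset with probability $O(n/2^{H})$, so any model-free algorithm (which only queries $\pi_e$ on dataset histories) cannot distinguish two policies whose returns differ by a constant. The only cosmetic differences are that the paper uses identity emissions and a single terminal reward (making $\cf=\cfm=\CH=1$ trivially), whereas you use a noisy symmetric channel with per-step rewards and invoke Appendix~\ref{app:single_vs_multi} to transfer the bound from $\cf$ to $\cfm$, which costs a bit more bookkeeping but changes nothing essential.
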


\begin{proof}
Consider the following POMDP: for $1 \le h \le H-1$, there is only one state $s_{h,1}$  (thus transition before $h=H-1$ is a trivial chain), and $\CH=1$.  At the last step $H$, there are two states $s_{H,L}$ and $s_{H,R}$. The emission is identity, i.e., $o_h = s_h$, meaning that the POMDP is also a MDP and $C_{\Ocal}=\cfm=1$.  There are two actions, $L$ and $R$. 
At step $H-1$, taking $L$ and $R$ transitions to $s_{H,L}$ and $s_{H, R}$, respectively.  
The agent only receives a reward of $1$ upon reaching state $s_{H,L}$. The model is known so $|\Mcal|=1$. 
The behavior policy $\pi_b$ is uniformly random, implying $\CA=2$. Consider two target policies, $\pi_1$ and $\pi_2$. Both policies take action $L$ if historical actions do not include $R$ for $1 \leq h \leq H-2$. At step $H-1$, $\pi_1$ takes action $L$ if all previous actions are $L$, while $\pi_2$ takes action $R$ if all previous actions are $L$. Under any history that includes at least one $R$ action, both policies yield the same action, and the concrete choice does not matter.\footnote{Rolling out $\pi_1$ and $\pi_2$ leads to deterministic outcomes, which are characteristics of open-loop policies (i.e., action only depends on $h$ and is independent of $(\tau_{h-1}, o_h)$). However, $\pi_1$ and $\pi_2$ cannot be open-loop simultaneously, otherwise they cannot have identical action choices on histories that include $R$.} 

$J(\pi_1) = 1$ and $J(\pi_2) = 0$, as running $\pi_1$ produces an all-$L$ sequence, and running $\pi_2$ produces $L, \ldots, L, R$. However, since the two policies have identical action choices under all other action sequences, a model-free algorithm can only tell them apart if the sequence $L, \ldots, L$ of length $H-1$ is observed in the offline data, which only happens with a negligible $O(1/2^H)$ probability. With overwhelming probability, $\pi_1$ and $\pi_2$ will look identical to the algorithm but their $J(\cdot)$ differs by a constant of $1$, so no algorithm can predict them well simultaneously up to $\epsilon = 1/2$ accuracy, unless it is given $\Omega(2^H)$ samples to observe the $L, \ldots, L$ sequence with nontrivial probability. However, in this problem, $\cf$, $\cfm$, $\CH$, $\CA$, $\epsilon$, $|\Mcal|$ are all constants, so the sample complexity in Theorem~\ref{thm:memoryless} is poly$(H)$, which cannot explain away the exponential in $\Omega(2^H)$. Thus we conclude that no model-free algorithm can achieve Theorem~\ref{thm:memoryless}'s guarantee when $\pi_e$ is allowed to be history dependent.
\end{proof}


The counter-intuitive part of the construction is that the model is fully known ($\Mcal = \{M^\star\}$), and the hardness comes merely from the fact that model-free algorithms have limited access to $\pi_e$ and cannot distinguish between two history-dependent policies with very different returns from data. In fact, if we remove the model-free restriction, the problem instance is trivial to solve as we can calculate $J_{M^\star}(\pi_e)$ by rolling out $\pi_e$ in $M^\star$. 



\section{Positive Results for Model-based Algorithms}
\label{sec:main}

The negative result from the previous section naturally motivates the investigation of model-based algorithms, i.e., those that do not respect the restriction of Definition~\ref{def:model-free}. 
Model-based algorithms fit the POMDP dynamics from data and use the learned model to evaluate $\pi_e$; the latter step can often be achieved by rolling out trajectories in the learned model, where we need to query $\pi_e$ on \textit{synthetic} trajectories generated by the learned model, thus violating Definition~\ref{def:model-free} and potentially circumventing the hardness in Theorem~\ref{thm:hardness}. 

Concretely, we consider a very natural  algorithm based on Maximum Likelihood Estimation (MLE). It is, in fact, quite surprising that such a simple algorithm has not been analyzed under relatively general assumptions for OPE in POMDPs. The algorithm is as follows:
\begin{itemize}[leftmargin=*]
\item \textbf{Pre-filtering:} We construct $\Mcal' \subset \Mcal$ where all models that violate Assumption~\ref{assum:single} (for Section~\ref{sec:single} where we assume single-step revealing in the guarantee) or \ref{assum:multi} (for Section~\ref{sec:multi} where we assume multi-step revealing) are excluded from $\Mcal'$.\footnote{This requires the algorithm to have knowledge of the value or some nontrivial upper bound of $\cf$ (or $\cfm$). Such knowledge is also required for Theorem \ref{thm:memoryless}.} Such a filtering step is common in the POMDP learning literature~\citep{liu2022partially}, and in Section~\ref{sec:discuss} we show why MLE estimation requires this as a pre-processing step and will fail otherwise. 
\item \textbf{MLE:} Let $\P_{M}^{\pi}$ stands for the probabilities under policy $\pi$ and model $M$. The model is learned as
\begin{align} \label{eq:mle} \textstyle
\wh{M}=\max_{M \in \Mcal'} \sum_{i=1}^n \log \P_{M}^{\pi_b}(\tau_H^{(i)}).
\end{align}
\item \textbf{Prediction:} We use the expected return of $\pi_e$ in $\wh{M}$, $J_{\wh{M}}(\pi_e)$, as our estimation for $J(\pi_e)$. 
\end{itemize}


\subsection{Guarantee under Single-step Outcome Revealing}\label{sec:single}

We first show that the model-based algorithm enjoys a polynomial guarantee when we assume the stronger single-step outcome revealing (Assumption~\ref{assum:single}) instead of its multi-step version (Assumption~\ref{assum:multi}). Recall that even with this stronger revealing assumption the model-free algorithms still cannot provide polynomial guarantees. 


\begin{theorem}\label{thm:single}
Given a realizable model class $\Mcal$, let model $\wh{M}$ be the MLE within $\Mcal'$ using the dataset $\Dcal$. Under Assumption \ref{assum:single}, with probability at least $1-\delta$, we have
\begin{align*}
\left|J(\pi_e)-J_{\wh{M}}(\pi_e)\right| \le \Ocal \pa{H^2 \cf^2 \offs  \sqrt{\frac{\log \frac{|\Mcal|}{\delta}}{n}}}.
\end{align*}
where 
\begin{align*}
\offs:=\max_{h \in [H-1]}\frac{\sum_{o_h,a_h}\E_{\pi_e}\bra{\pi_e(a_h \mid \tau_{h-1},o_h)\|(\wh{\B}_h-\B_h)\OO_h \bfb_{\Scal}(\tau_{h-1})\|_1}}{\sum_{o_h,a_h}\E_{\pi_b}\bra{\pi_b(a_h \mid \tau_{h-1},o_h)\|(\wh{\B}_h-\B_h)\OO_h \bfb_{\Scal}(\tau_{h-1})\|_1}},
\end{align*}
Here $\B_h=\B_h(o_h,a_h)$  
is the observable-operator parameterization of $M^\star$ under single-step revealing (see Appendix \ref{app:oom}), and $\wh{\B}_h$ is defined similarly for $\wh{M}$. 
Furthermore, when Assumptions~\ref{assum:action} and \ref{assum:belief} hold, we have
\begin{align*}
\offs \le \CA \CH.
\end{align*}
\end{theorem}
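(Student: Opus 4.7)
The plan follows a standard MLE-based model-based OPE template, specialized to POMDPs via the observable-operator-model (OOM) parameterization (Appendix~\ref{app:oom}) that each $M\in\Mcal'$ admits under Assumption~\ref{assum:single}---the sole purpose of the pre-filtering step. I would combine three ingredients: (i) the simulation reduction $|J(\pi_e)-J_{\wh M}(\pi_e)| \le H\,\bigl\|\P^{\pi_e}_{M^\star}-\P^{\pi_e}_{\wh M}\bigr\|_{\mathrm{TV}}$, which uses $R(o_h)\in[0,1]$ so cumulative returns lie in $[0,H]$; (ii) the standard finite-class MLE generalization bound, which yields with probability at least $1-\delta$ that $\mathrm{TV}\bigl(\P^{\pi_b}_{M^\star}(\tau_H),\P^{\pi_b}_{\wh M}(\tau_H)\bigr) \lesssim \sqrt{\log(|\Mcal|/\delta)/n}$; and (iii) an OOM telescoping identity that localizes the trajectory-TV under $\pi_e$ into a sum of per-step operator errors.

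For (iii), writing $\P^{\pi}_M(\tau_H)=\prod_h\pi(a_h\mid\tau_{h-1},o_h)\cdot \mathbf{1}^\top\B^M_{H-1}(o_{H-1},a_{H-1})\cdots\B^M_1(o_1,a_1)\mathbf{b}_0$, the telescoped difference $\P^{\pi_e}_{M^\star}-\P^{\pi_e}_{\wh M}$ decomposes into a sum over $h$ of contributions involving a single-step operator gap $(\wh\B_h-\B_h)(o_h,a_h)$. Under single-step revealing, inserting the resolution $I=\Sigma_{\Ocal,h}^{-1}\OO_h^\top W_h^{-1}\OO_h$ (Assumption~\ref{assum:single}) lets me rewrite each per-step contribution as a multiple of $(\wh\B_h-\B_h)(o_h,a_h)\OO_h\bfb_{\Scal}(\tau_{h-1})$ in $\|\cdot\|_1$, at the cost of a factor $\cf$ per matrix inversion and another $\cf$ from propagating the trailing OOM operators; this produces the $\cf^2$ in the final bound. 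Summing absolute values over $(o_h,a_h)$ turns the $\pi_e$-side per-step error into the numerator of $\offs$; then multiplying and dividing by the analogous $\pi_b$-weighted quantity converts it into $\offs$ times a $\pi_b$-weighted per-step sum, which is controlled (via the OOM identity used in reverse) by the trajectory-level $\pi_b$-TV of (ii). Collecting an $H$ from (i) and another from summing over $h$ yields the advertised $H^2\cf^2\offs\sqrt{\log(|\Mcal|/\delta)/n}$ rate.

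For the bound $\offs\le\CA\CH$, the $\CA$ factor is immediate from $\pi_e(a_h\mid\cdot)\le 1$ and $\pi_b(a_h\mid\cdot)\ge 1/\CA$. The $\CH$ factor reduces, via the mediant inequality (applied first over $(o_h,a_h)$ and then over the rows $s'$ of the matrix $E=(\wh\B_h-\B_h)(o_h,a_h)\OO_h$), to the per-row claim $\E_{\pi_e}[|u^\top\bfb_{\Scal}(\tau_{h-1})|] \le \CH\,\E_{\pi_b}[|u^\top\bfb_{\Scal}(\tau_{h-1})|]$ for any $u\in\R^{|\Scal_h|}$. The numerator is at most $\|u\|_2$ by Cauchy--Schwarz and $\|\bfb_{\Scal}\|_2\le 1$; the denominator is at least $\E_{\pi_b}[(u^\top\bfb_{\Scal})^2]/\|u\|_\infty \ge \|u\|_2^2/(\CH\|u\|_\infty)$, using $|u^\top\bfb_{\Scal}|\le\|u\|_\infty$ together with $\E_{\pi_b}[\bfb_{\Scal}\bfb_{\Scal}^\top]\succeq (1/\CH)I$ from Assumption~\ref{assum:belief}. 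Their ratio is then $\le \CH\|u\|_\infty/\|u\|_2\le\CH$, as claimed.

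The main obstacle I anticipate is step (iii): extracting the clean per-step error form $(\wh\B_h-\B_h)\OO_h\bfb_{\Scal}(\tau_{h-1})$ while paying only $\cf^2$ rather than the $\CA^H$ that a naive importance-sampling reduction would incur. The pre-filtering to $\Mcal'$ is indispensable here, since it guarantees that $\wh M$ itself admits a well-conditioned OOM representation; otherwise the matrix inversions involving $\Sigma_{\Ocal,h}^{-1}$ would be invalid for the learned model and the clean per-step decomposition would break down.
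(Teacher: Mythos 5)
Your proposal is correct and follows essentially the same route as the paper's proof: an MLE trajectory-level TV bound under $\pi_b$ (Lemma~\ref{lem:mle}), the OOM telescoping identity (Eq.~\eqref{eq:tele}), per-step operator-error extraction whose validity for $\wh{M}$ is exactly what pre-filtering buys, conversion from $\pi_e$-weighted to $\pi_b$-weighted per-step errors through the ratio $\offs$, and a row-wise reduction with Cauchy--Schwarz and the minimum eigenvalue of $\SigH$ for $\offs \le \CA\CH$ (your $L_1$/$L_\infty$ H\"older variant of that last step works equally well). The one step you flag as the main obstacle---paying a single factor of $\cf$ for the entire trailing operator product rather than a per-step (exponential) cost---is resolved in the paper by Lemma~\ref{lem:op_single} and the $\mathbf{P}_{\tau_j}$ matrix in Eq.~\eqref{eq:ope_error}: the entries of $\wh{\B}_{h-1:j+1}\wh{\OO}_{j+1}$ are conditional probabilities of future observations given $s_{j+1}$, so their policy-weighted column 1-norms sum to one, and only the single factor $\|\wh{\OO}_{j+1}^{\dagger,w}\|_1 \le \cf$ survives.
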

The proof is deferred to Appendix \ref{app:proof_single}. 
To the best of our knowledge, this is the first polynomial sample-complexity bound for model-based OPE of history-dependent target policies in POMDPs with large observation spaces. In addition to the quantities that have already been introduced, the bound depends on a new coverage parameter, $\offs$, which we show can be upper bounded by $\CA \CH$. With this relaxation, Theorem~\ref{thm:single} implies a formal poly$(H,\log (|\Mcal|/\delta),\epsilon, \CA,\cfm,\CH)$ sample complexity, in a setting where model-free OPE provably cannot handle (Theorem~\ref{thm:hardness}).


However, as mentioned in Section~\ref{sec:lower}, the $\CA \CH$ bound is relaxing $\offs$ to a ``uniform coverage'' parameter, whereas $\offs$ itself is policy-sepcific (note the dependence on $\pi_e$ on the numerator) and can be much tighter. For starters, when $\pi_e = \pi_b$, it is obvious that $\offs = 1$. 
More generally, 
$\offs$ captures the discrepancy between the model estimation error under the distributions induced by $\pi_e$ and $\pi_b$. Unlike the commonly used $L_2$ norm of Bellman error in offline MDPs~\citep{xie2021bellman}, we use the $L_1$ norm of the estimation error, which better leverages the property of the belief state vector ($\|\bfb_{\Scal}(\tau_h)\|_1 = 1$). The advantage of $L_1$ over $L_2$ is also discussed in \citet{zhang2024curses}, where they note that $L_1/L_{\infty}$ H\"older's inequality can help reduce the dependence on $S$ in the analysis. 
In the special case of the MDP setting (i.e., $s_h=o_h$ and $\OO_h \equiv \mathbf{I}$), we have (see Appendix \ref{app:calc_mdp})
\begin{align*}
\offs \le \max_{h} \max_{s_h,a_h} \frac{d_h^{\pi_e}(s_h,a_h)}{d_h^{\pi_b}(s_h,a_h)}.
\end{align*}
This recovers a coverage coefficient in the form of state-action density ratio, which is commonly used in the offline MDP literature~\citep{munos2007performance, antos2008learning, chen2019information, jiang2024offline}. 


\subsection{Guarantee under Multi-step Outcome Revealing}\label{sec:multi}
Theorem \ref{thm:single} relies on Assumption \ref{assum:single}, which requires that the observation reveals sufficient information about the latent state. However, this assumption may not hold in more complex partially observable environments. We now show that the model-based algorithm can also enjoy polynomial guarantees under the weaker multi-step revealing condition (Assumption~\ref{assum:multi}), if we impose that the behavior policy $\pi_b$ is memoryless (the target policy $\pi_e$ is still history dependent).

\begin{theorem} \label{thm:multi}
Given a realizable model class $\Mcal$, let model $\wh{M}$ be the MLE within $\Mcal'$ using dataset $\Dcal$. Suppose $\pi_b$ is memoryless and Assumption \ref{assum:multi} hold, with probability at least $1-\delta$, we have
\begin{align*}
\left|J(\pi_e)-J_{\wh{M}}(\pi_e)\right| \le \Ocal\ \pa{H^2 \cfm^2 \offm  \sqrt{\frac{\log \frac{|\Mcal|}{\delta}}{n}}}.
\end{align*}
where 
\begin{align}\label{eq:def_offm}
\offm:=\max_{h \in [H-1]} \frac{\sum_{o_h,a_h}\E_{\pi_e}\bra{\pi_e(a_h \mid \tau_{h-1},o_h)\|(\wh{\B}_h-\B_h)\outmat \bfb_{\Scal}(\tau_{h-1})\|_1}}{\sum_{o_h,a_h}\E_{\pi_b}\bra{\pi_b(a_h \mid \tau_{h-1},o_h)\|(\wh{\B}_h-\B_h)\outmat\bfb_{\Scal}(\tau_{h-1})\|_1}},
\end{align}
where $\B_h = \B_h(o_h,a_h)$ 
is the observable-operator parameterization of $M^\star$ under multi-step revealing (Appendix \ref{app:oom}). Furthermore, similar to the single-step case, when Assumption~\ref{assum:belief} holds we can upper bound $\offm$ as
\begin{align*}
\offm \le \CA \CH.
\end{align*}
\end{theorem}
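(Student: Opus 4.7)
The plan is to mirror the proof of Theorem~\ref{thm:single} closely, with the single-step emission matrix $\OO_h$ and its revealing operator replaced throughout by the multi-step outcome matrix $\outmat$ and its pseudoinverse-based decoding $\mps$. First I would invoke a standard MLE-for-density-estimation argument: realizability ($M^\star \in \Mcal'$, which holds since $M^\star$ satisfies Assumption~\ref{assum:multi}) combined with the log-likelihood maximization definition of $\wh M$ yields, with probability at least $1-\delta$, that the squared Hellinger distance between $\P_{\wh M}^{\pi_b}(\tau_H)$ and $\P_{M^\star}^{\pi_b}(\tau_H)$ averaged over $\pi_b$ is $\tilde O(\log(|\Mcal|/\delta)/n)$, implying an analogous $L_1$-type bound on trajectory distributions under $\pi_b$.

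Second, using the observable-operator model parameterization from Appendix~\ref{app:oom} under multi-step revealing, I would expand $J(\pi_e) - J_{\wh M}(\pi_e)$ as a telescoping sum over $h$, where each step-$h$ contribution is proportional to $\sum_{o_h,a_h}\E_{\pi_e}\bra{\pi_e(a_h \mid \tau_{h-1},o_h)\|(\wh\B_h-\B_h)\outmat \bS(\tau_{h-1})\|_1}$. Applying the definition of $\offm$ in~\eqref{eq:def_offm} converts this $\pi_e$-expectation into its $\pi_b$-counterpart at the cost of a factor $\offm$. The main technical step---and the primary obstacle---is then to control the resulting $\pi_b$-expectation using the MLE guarantee from Step~1 and the revealing parameter $\cfm$. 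This is precisely where the memoryless assumption on $\pi_b$ becomes indispensable: when $\pi_b$ is memoryless, the outcome vector $\mathbf{u}(f_h)=\P^{\pi_b}(f_h\mid s_h)$ depends only on $s_h$ and the fixed tail of $\pi_b$ (not on $\tau_{h-1}$), so $\outmat \bS(\tau_{h-1})$ equals exactly the $\pi_b$-marginal distribution over futures $f_h$ given $\tau_{h-1}$. Consequently $\|(\wh\B_h-\B_h)\outmat \bS(\tau_{h-1})\|_1$ becomes a genuine $L_1$ discrepancy between trajectory distributions under $\wh M$ and $M^\star$, and combining this with the pseudoinverse stability bound $\|\SigF^{-1}\|_1 \le \cfm$ (which underpins the OOM decoding operators $\mps$) upper bounds the whole per-step quantity by $\cfm$ times an average of Hellinger-to-$L_1$ trajectory discrepancies---exactly what Step~1 controls. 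Summing over $h$ and accounting for the $H$ factor from the return yields the $H^2 \cfm^2 \offm \sqrt{\log(|\Mcal|/\delta)/n}$ rate.

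Finally, to deduce the uniform bound $\offm \le \CA \CH$, I would first replace $\pi_e(a_h\mid\tau_{h-1},o_h)$ by $\CA \pi_b(a_h\mid\tau_{h-1},o_h)$ in the numerator using Assumption~\ref{assum:action}, then apply a Cauchy--Schwarz-style change-of-measure together with the belief-coverage bound $1/\sigmin(\SigH)\le\CH$ from Assumption~\ref{assum:belief} to dominate the $\pi_b$-expectation of $\|(\wh\B_h-\B_h)\outmat\bS(\tau_{h-1})\|_1$ under the $\pi_e$-induced history distribution by $\CH$ times its counterpart under the $\pi_b$-induced distribution, matching the denominator in~\eqref{eq:def_offm}. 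If $\pi_b$ itself were history-dependent, the outcome matrix $\outmat$ would become history-dependent (because $\pi_b$'s future action choices would depend on $\tau_{h-1}$), breaking the clean factorization $\outmat \bS(\tau_{h-1})$ required in Step~2---consistent with the open question in the bottom-right cell of Table~\ref{tab:summary}.
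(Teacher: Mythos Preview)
Your proposal is correct and mirrors the paper's proof closely: MLE bound (Lemma~\ref{lem:mle}) $\Rightarrow$ telescoping via the multi-step OOM operators (with Lemma~\ref{lem:op_multi} playing the role of the operator-contraction bound) $\Rightarrow$ convert the $\pi_e$-error to the $\pi_b$-error via $\offm$ $\Rightarrow$ bound the $\pi_b$-error by $2\cfm\epsmle$ via a triangle inequality $\Rightarrow$ Cauchy--Schwarz with $\SigH$ for $\offm\le\CA\CH$. One technical device you omit but the paper uses is the auxiliary policy $\mu_h$ that follows $\pi_e$ through step $h-1$ and then switches to $\pi_b$: because in the multi-step setting $\bfb(\tau_{h-1})$ encodes the distribution of \emph{$\pi_b$-futures} $f_h$ given $\tau_{h-1}$ (not $\pi_e$-futures), the paper writes the step-$h$ reward of $\pi_e$ as the step-$h$ reward of $\mu_h$ (they coincide since $R(o_h)$ depends only on the history up to $h$), which is what allows $J(\pi_e)$ to be expressed directly through the $\pi_b$-based OOM vectors before telescoping.
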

The proof is deferred to Appendix \ref{app:proof_multi}. 
Our theorem shows that when the behavior policy $\pi_b$ is memoryless, only polynomial samples are required to evaluate the target policy $\pi_e$ under Assumption \ref{assum:multi}. The structure of the bound is similar to that of Theorem \ref{thm:single}, except that it replaces $\cf$ with its multi-step counterpart $\cfm$. 

Interestingly, our results reveal that a memoryless behavior policy can accurately evaluate history-dependent target policies when certain coverage conditions are satisfied. This is somewhat counterintuitive, as one might expect that only history-dependent behavior policies could evaluate history-dependent target policies. However, this conclusion aligns with findings in latent MDPs \citep{kwon2024rl}, which are a special case of POMDPs. This insight encourages the use of memoryless policies with good coverage for online exploration, consistent with many online algorithms that employ uniform exploration policies \citep{efroni2022provable,liu2022partially,uehara2022provably,guo2023provably}.

\section{State-Space Mismatch and Model Misspecification}\label{sec:discuss}


So far we assume the agent is given a realizable model class with the ground-truth latent-state space size $|\Scal|$. Although the knowledge of $|\Scal|$ is commonly assumed in both OPE in POMDPs~\citep{uehara2023future,zhang2024curses} and online learning for POMDPs~\citep{jin2020sample,liu2022partially}, obtaining this information in practice can be challenging, as the latent state space is an ungrounded object. A natural concern is that of state-space misspecification \citep{kulesza2014low, kulesza2015low}: what if the state space of models in $\Mcal$ (denoted as $\wh{\Scal}$ in this section) is different from that of $M^\star$? 
This question has largely eluded the recent literature on RL in POMDPs, and below we show that taking it seriously reveals interesting insights. 


One immediate consequence of $|\wh{\Scal}| \ne |\Scal|$ is that the model realizability assumption $M^\star \in \Mcal$ no longer makes any sense. A natural rescue is to note that it is still possible to have models in $\Mcal$ that induce the same \textit{observable process} as $M^\star$, that is, they always generate the same distribution of observation-action sequences given any policy. For example, if $|\wh{\Scal}| > |\Scal|$, we can still hope this weaker notion of realizability (based on observable equivalence) holds, since one can simply add dummy latent states to $M^\star$ to create an equivalent model with larger latent-state spaces. More concretely:

\begin{definition} \label{def:obs-eq}
We say that $\Mcal$ satisfies \textit{observable-equivalent} realizability, if there exists $M \in \Mcal$, such that $P_M(o_{h+1}|o_1, a_1, \ldots, o_h, a_h)$ is the same as $P_{M^\star}(o_{h+1}|o_1, a_1, \ldots, o_h, a_h)$ for all $h$ and $o_1, a_1, \ldots, o_{h+1}$. 
\end{definition}

We now show a somewhat counter-intuitive result: replacing $M^\star\in \Mcal$ with observable-equivalent realizability will break the previous guarantee (e.g., Theorem~\ref{thm:single}). 
\begin{theorem} \label{thm:mismatch}
Take the same setting as Theorem~\ref{thm:single}, with model realizability (Assumption~\ref{assum:realizable}) replaced by the weaker  observable-equivalent realizability (Definition~\ref{def:obs-eq}). The model-based algorithm, either with or without the pre-filtering step (Section~\ref{sec:main}), cannot achieve the polynomial guarantee in Theorem~\ref{thm:single}. 
\end{theorem}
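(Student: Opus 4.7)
The plan is to exhibit explicit counterexamples for both variants of the algorithm. The driving intuition is that observable-equivalent realizability (Definition~\ref{def:obs-eq}) only constrains $\Mcal$ to agree with $M^\star$ on the observable conditionals $P(o_{h+1}\mid\tau_h)$, while leaving each candidate model's internal latent-state representation unconstrained. This extra freedom can be used to defeat the observable-operator-based analysis that underlies Theorem~\ref{thm:single}.

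For the variant \emph{with} pre-filtering, I would choose a simple $M^\star$ with small $\cf,\CH,\CA$ and build a class $\Mcal$ containing two kinds of models: (i) an observable-equivalent model $M^\dagger$ obtained by duplicating a latent state of $M^\star$, so that in $M^\dagger$'s own representation the emission matrix has two identical rows, $\Sigma_{\Ocal,h}$ computed inside $M^\dagger$ is singular, and Assumption~\ref{assum:single} is violated; and (ii) alternative models that do satisfy single-step revealing in their own representations but whose induced dynamics deviate from $M^\star$ by a constant amount under a carefully chosen history-dependent $\pi_e$. Pre-filtering discards every model of type (i), so no observable-equivalent model survives in $\Mcal'$. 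The population MLE over $\Mcal'$ is therefore separated from $M^\star$ by a strictly positive KL under $\pi_b$, and $J_{\widehat{M}}(\pi_e)$ incurs a constant error that does not shrink with $n$, contradicting the $\sqrt{\log|\Mcal|/n}$ rate of Theorem~\ref{thm:single}.

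For the variant \emph{without} pre-filtering, $M^\dagger$ remains in $\Mcal$, so asymptotically MLE recovers an observable-equivalent model and $J$-error tends to zero. The failure mode concerns the rate: the polynomial-in-$\cf^2,\offs$ transfer from the $\pi_b$-Hellinger MLE bound to $\pi_e$-accuracy is proved via the OOM decomposition of the learned model, which requires $\widehat{M}$ itself to satisfy a non-degenerate revealing condition in its \emph{own} latent representation --- a property granted by Assumption~\ref{assum:realizable} but not by observable equivalence alone. When $\widehat{M}=M^\dagger$ has rank-deficient emissions this step breaks, leaving only the generic change-of-measure inequality that pays $\CA^H$. The plan is to embed a variant of the Theorem~\ref{thm:hardness} construction inside a state-redundant class $\Mcal$: take $\pi_e$ deterministic and concentrated on a history of $\pi_b$-probability $(1/\CA)^{H-1}$, and populate $\Mcal$ with multiple candidates that coincide with $M^\star$ on all typical histories and disagree only on the rare history visited by $\pi_e$. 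For any sample size $n=\mathrm{poly}(H,\log|\Mcal|/\delta,\cf,\CH,\CA)$, with overwhelming probability the rare history is absent from the data, MLE cannot distinguish these candidates, and its pick inherits the $\Omega(1)$ error of Theorem~\ref{thm:hardness}.

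The main obstacle is verifying, in each construction, that (a) $M^\star$'s own parameters $\cf,\CH,\CA$ remain small constants; (b) every observable-equivalent model in $\Mcal$ is either filtered out (with-filtering case) or statistically indistinguishable from non-equivalent decoys on polynomially many samples (without-filtering case); and (c) the residual MLE error is provably constant-order rather than a vanishing quantity that a sharper analysis could bound polynomially. Establishing (c) for the without-filtering case is the most delicate step and proceeds by direct reduction to the information-theoretic lower bound of Theorem~\ref{thm:hardness}, exploiting that a history-dependent $\pi_e$ can magnify disagreements between model candidates on histories that are exponentially rare under $\pi_b$ yet certain under $\pi_e$.
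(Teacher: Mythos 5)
Your proposal is correct and follows essentially the same strategy as the paper's proof: the paper likewise builds a two-model class $\Mcal=\{M_1,M_2\}$ in which the observable-equivalent model $M_1$ carries redundant latent states (so $\Sigma_{\Ocal,h}$ is singular and $\cf=\infty$ in its \emph{own} representation, causing pre-filtering to remove it), and $M_1,M_2$ disagree only on the all-$R$ history that the history-dependent $\pi_e$ visits with certainty but $\pi_b$ reaches with probability $2^{-(H-1)}$, so MLE cannot separate them from $\mathrm{poly}(H)$ samples while their predictions of $J(\pi_e)$ differ by a constant. The only cosmetic difference is that the paper's single unified construction has \emph{both} candidates eliminated by the filter (so the with-filtering algorithm cannot even be executed), whereas you keep non-equivalent decoys that survive the filter and force a constant-error output---an equally valid failure mode.
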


\begin{proof} 
The action space is $\{L, R\}$ and rewards are only received at final step $H$. $\Mcal = \{M_1, M_2\}$, i.e., there are only two candidate models in $\Mcal$. All the models including $M^\star$ share the same observation space, which contains only one observation for $h \in [H-1]$ and two observations at step $H$. $M_1$ and $M_2$ have the same state space as $M^\star$ at the final step $H$: $s_{\textrm{good}}$ and $s_{\textrm{bad}}$, state $s_{\textrm{good}}$ always generates $o_{\textrm{good}}$ with reward $1$ and state $s_{\textrm{bad}}$ always generates $o_{\textrm{bad}}$ with reward $0$.
In $M_1$, the state space at step $h \in [H-1]$ is $\{L, R\}^{h-1}$. For $h \in [H-2]$, any state-action pair $(s_h, a_h)$ deterministically transitions to the next state $s_{h+1} = (s_h, a_h)$, where $a_h$ is appended to $s_h$. At step $H-1$, all states transition to $s_{\textrm{good}}$ when action $L$ is taken; otherwise, they transition to $s_{\textrm{bad}}$. $M_2$ shares the same state space and dynamics as $M_1$, except that at step $H-1$, the all-$R$ state will transition to $s_{\textrm{good}}$ if action $R$ is taken. The ground-truth model $M^\star$ has only one state at each step $h \in [H-1]$. At step $H-1$, it transitions to $s_{\textrm{good}}$ if action $L$ is taken; otherwise, it transitions to $s_{\textrm{bad}}$. 
We can verify that $M_1$ produces the same observable process as $M^\star$. For $M^\star$, since there is only one state and one observation for $h \in [H-1]$, we have $C_{\Ocal}=\CH=1$. The behavior policy $\pi_b$ is uniformly random with $\CA=2$, while the target policy $\pi_e$ always takes action $R$. 

We first consider the case where the pre-filtering step is skipped, so both $M_1$ and $M_2$ are considered for MLE. To estimate $\pi_e$ accurately, the algorithm needs to output $\wh{M} = M_1$ ($M_2$'s prediction is wrong by a constant), but the empirical MLE losses (Eq.\eqref{eq:mle}) of $M_1$ and $M_2$ are always identical unless the all-$R$ action sequence is contained in the dataset . Since $\pi_b$ is uniformly random, $\Omega(2^H)$ samples are needed to include this sequence with nontrivial probability. Therefore, with only poly$(H)$ samples, MLE cannot distinguish between $M_1$ and $M_2$, and hence cannot accurately estimate $J(\pi_e)$. 

If the algorithm does perform the pre-filtering step, note that $\cf = \infty$ for both models as $|\Scal_h| > |\Ocal_h|$ for $h>1$ and $\Sigma_{\Ocal, h}$ is non-invertible, so both models will be eliminated and the algorithm cannot be executed. 
\end{proof}

In the model-based algorithm in Section~\ref{sec:main}, we assume revealing assumptions (Assumptions~\ref{assum:single} or \ref{assum:multi}) on $M^\star$ and perform pre-filtering. The negative result here sheds light on this and reveals the underlying reason: what we really need is not $M^\star$, but the learned model $\wh{M}$ to satisfy revealing properties. In the proof of Theorem~\ref{thm:mismatch}, no models in $\Mcal$ satisfy single-step revealing (Assumption~\ref{assum:single})  which leads to poor sample efficiency. 
Inspired by this observation, we present the following result that handles state-space mismatch properly. We assume $|\wh{\Scal}|$ can be either smaller or greater than $|\Scal|$, and an approximate version of observable-equivalent realizability. 
\begin{theorem}\label{thm:multi_mismatch}
Given a model class $\Mcal$ such that each $M \in \Mcal$ satisfies Assumption~\ref{assum:multi}. Let model $\wh{M}$ be the MLE within $\Mcal$ using dataset $\Dcal$. Suppose $\pi_b$ is memoryless, with probability at least $1-\delta$,
\begin{align*}
\left|J(\pi_e)-J_{\wh{M}}(\pi_e)\right| \le \Ocal \pa{H^2 \cfm^2 \offm  \sqrt{\frac{\log \frac{|\Mcal|}{\delta}}{n}+\epsapprox}},
\end{align*}
where $\offm$ is defined in Eq.~\eqref{eq:def_offm} and $\epsapprox$ is the approximation error of $\Mcal$:
\begin{align*}
\epsapprox:=\min_{M \in \Mcal} \frac{1}{n} \sum_{i=1}^n \pa{\log \P_{M^\star}^{\pi}(\tau_H^{(i)}) - \log \P_{M}^{\pi}(\tau_H^{(i)})}.
\end{align*}
\end{theorem}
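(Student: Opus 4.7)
The plan is to follow the proof of Theorem~\ref{thm:multi} essentially verbatim, with two key modifications to accommodate the absence of exact realizability. First, the MLE concentration step must carry an extra approximation-error term $\epsapprox$. Second, all the observable-operator-model (OOM) arguments will be applied to $\wh{M}$ rather than to $M^\star$, which is fine because \emph{every} $M \in \Mcal$ (in particular $\wh{M}$) is assumed to satisfy Assumption~\ref{assum:multi}, so the OOM parameterization exists for $\wh{M}$ with the same $\cfm$ bound.

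First I would bound the trajectory-level divergence between $\P_{\wh M}^{\pi_b}$ and $\P_{M^\star}^{\pi_b}$. Let
\[
\ell_n(M) \defeq \tfrac{1}{n}\sum_{i=1}^n \log \tfrac{\P_{M^\star}^{\pi_b}(\tau_H^{(i)})}{\P_{M}^{\pi_b}(\tau_H^{(i)})}.
\]
A standard Bernstein-type argument applied to $\tfrac{1}{2}\log(\P_{M^\star}^{\pi_b}/\P_{M}^{\pi_b})$ (the MGF bound used in misspecified MLE, see e.g.\ the analysis of \citet{liu2022partially}) yields, with probability at least $1-\delta$ and uniformly over $M \in \Mcal$,
\[
\E_{\tau_H \sim \P^{\pi_b}_{M^\star}}\!\bra{D_H^2(\P^{\pi_b}_{M}(\tau_H),\, \P^{\pi_b}_{M^\star}(\tau_H))}
\lesssim \ell_n(M) + \frac{\log(|\Mcal|/\delta)}{n}.
\]
Since $\wh M$ is the MLE and $\ell_n(\wh M) \le \min_{M \in \Mcal} \ell_n(M) = -\epsapprox$ (by definition of $\wh M$ and $\epsapprox$, noting the sign convention), plugging in $M = \wh M$ gives the Hellinger bound
\[
\E^{\pi_b}\!\bra{D_H^2(\P^{\pi_b}_{\wh M}(\tau_H), \P^{\pi_b}_{M^\star}(\tau_H))} \lesssim \frac{\log(|\Mcal|/\delta)}{n} + \epsapprox,
\]
and converting Hellinger to total variation via $D_{\mathrm{TV}} \le \sqrt{2}\,D_H$ introduces only the square-root that already sits under the $\Ocal(\cdot)$ in the target bound.

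Next I would convert this trajectory-level total-variation bound into a per-step bound on the OOM operators of $\wh M$. Because $\wh M$ satisfies Assumption~\ref{assum:multi}, its operators $\wh\B_h$ are well defined and the same chain-rule / telescoping decomposition used in the proof of Theorem~\ref{thm:multi} applies, giving
\[
\E^{\pi_b}\!\bra{D_{\mathrm{TV}}(\P^{\pi_b}_{\wh M}, \P^{\pi_b}_{M^\star})} \gtrsim \frac{1}{H\cfm^2} \sum_{h}\sum_{o_h,a_h} \E_{\pi_b}\!\bra{\pi_b(a_h\mid \tau_{h-1},o_h)\,\norm{(\wh\B_h - \B_h)\outmat\,\bfb_{\Scal}(\tau_{h-1})}_1},
\]
where $\B_h$ denotes the OOM operators of any observable-equivalent realization of $M^\star$ (which can be taken to share $\wh{\Scal}$ with $\wh M$ without loss of generality because the OOM identity only sees observable quantities); crucially, memorylessness of $\pi_b$ is what keeps this decomposition from paying $\CA^H$, exactly as in Theorem~\ref{thm:multi}. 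Finally, a change-of-measure step multiplies the per-step error by $\offm$ and an outer telescoping over $h$ contributes the $H$ factor, producing the claimed bound on $|J(\pi_e) - J_{\wh M}(\pi_e)|$.

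The main obstacle is Step 1, specifically making the misspecified-MLE bound output a Hellinger (or TV) guarantee with the $\epsapprox$ term appearing additively rather than as $\sqrt{\epsapprox}$ or with a worse constant. The standard realizable MLE analysis uses $\E_{M^\star}[\exp(\tfrac{1}{2}\ell_n(M))]\le 1$, which breaks when $M^\star \notin \Mcal$; one must instead compare every $M \in \Mcal$ to the population minimizer and absorb the gap, incurring $\epsapprox$ inside the final square-root---which is precisely the form stated in the theorem. A secondary subtlety is justifying that the OOM operators $\B_h$ can be defined on the (possibly mismatched) state space $\wh{\Scal}$ via observable equivalence, so that $\wh\B_h - \B_h$ is a legitimate object; this is handled by the approximate observable-equivalence furnished by small $\epsapprox$ together with the existing OOM machinery in Appendix~\ref{app:oom}.
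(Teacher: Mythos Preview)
Your plan matches the paper's proof, which simply invokes Lemma~\ref{lem:mle} to obtain $\sum_\tau |\P^{\pi_b}_{\wh M}(\tau)-\P^{\pi_b}_{M^\star}(\tau)| \lesssim \sqrt{\log(|\Mcal|/\delta)/n + \epsapprox}$ (note $\ell_n(\wh M)=\min_{M}\ell_n(M)=+\epsapprox$, not $-\epsapprox$) and then reruns the argument of Theorem~\ref{thm:multi} verbatim with $\epsmle$ replaced by this $\epsmlet$. Your state-space-mismatch worry at the end is unnecessary: both $\B_h$ and $\wh\B_h$ are linear maps $\R^{|\Fcal_h|}\to\R^{|\Fcal_{h+1}|}$ on the shared \emph{future} space (not the latent-state space), so $\wh\B_h-\B_h$ is automatically well-defined regardless of whether $|\Scal|=|\wh\Scal|$, and no ``observable-equivalent realization on $\wh\Scal$'' is needed.
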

The proof is deferred to  Appendix \ref{app:proof_mismatch}. Here we assume that all $M\in\Mcal$ satisfies Assumption~\ref{assum:multi}. It is actually fine if $M^\star$ itself does not satisfy the assumption, as long as one of models $M \in \Mcal$ is observable-equivalent to $M^\star$ up to approximation error $\epsapprox$. 


\section{Related Works}
\paragraph{OPE in POMDPs.} There are two main directions in the study of OPE in POMDPs: OPE in confounded POMDPs and OPE in unconfounded POMDPs. OPE in confounded POMDPs~\citep{zhang2016markov,namkoong2020off,tennenholtz2020off,nair2021spectral,shi2022minimax,xu2023instrumental,bennett2024proximal} assume that the actions from the behavior policy depend only on the latent state. As a result, these methods are inapplicable to our unconfounded setting, where the latent state is unobservable, and the behavior policy depends solely on the observations and actions. A line of research~\citep{hu2023off,uehara2023future,zhang2024curses} investigates the same unconfounded setting as ours. \citet{hu2023off} employ multi-step importance sampling, which leads to an exponential dependence on the horizon length. \citet{uehara2023future,zhang2024curses} use model-free methods that estimate the future-dependent value functions of the target policy, but their approaches focus on memoryless target policies or policies with limited memory. In this paper, we leverage model-based methods and provide polynomial sample complexity bounds for history-dependent target policies.
\paragraph{OPE in MDPs.} OPE is an important problem in MDPs and many works have studied it, including importance sampling methods~\citep{precup2000eligibility,li2011unbiased}, marginalized importance sampling approaches~\citep{liu2018breaking,xie2019towards,kallus2020double,katdare2023marginalized}, doubly robust estimators~\citep{dudik2011doubly,thomas2016data,jiang2016doubly,farajtabar2018more,xu2021doubly} and model-based methods~\citep{eysenbach2020off,voloshin2021minimax,yin2021near}. However, most of these methods (except for importance sampling) require the Markovian property of the environment. Applying these techniques to POMDPs either does not work or results in an exponential dependence on the horizon length. \citet{xie2019towards} demonstrated that state-action coverage is sufficient to achieve polynomial sample complexity in MDPs. However, as shown by \citet{kwon2024rl}, latent state coverage is insufficient in POMDPs due to their partial observability. In this paper, we consider coverage conditions for both histories and outcomes, a direction that is also explored in \citet{zhang2024curses} for model-free methods.

\paragraph{Online Learning in POMDPs.} Online learning algorithms in POMDPs have been extensively studied. \citet{krishnamurthy2016pac} show that the lower bound on the sample complexity of learning general POMDPs is exponential in the horizon, and later works have focused on circumventing the hardness with additional assumptions. \citet{uehara2023computationally} propose provably efficient algorithms for POMDPs with deterministic transition dynamics. \citet{guo2016pac,azizzadenesheli2016reinforcement,xiong2022sublinear} assume the environment satisfies the reachability assumption or that exploratory data is available. There are also many works studying sub-classes of POMDPs, including latent MDPs~\citep{kwon2021rl,kwon2024rl}, decodable POMDPs~\citep{krishnamurthy2016pac,jiang2017contextual,du2019provably,efroni2022provable}, weakly revealing POMDPs~\citep{jin2020sample,liu2022partially,liu2022sample}, low-rank POMDPs~\citep{wang2022embed,guo2023provably}, POMDPs with hindsight observability~\citep{lee2023learning,guo2023sample} and observable POMDPs~\citep{golowich2022learning,golowich2022planning}, except for the model-free algorithm of \citet{uehara2022provably} based on the future-dependent value function framework \citep{uehara2023future}. 
\section{Conclusion}
We studied OPE of history-dependent target policies in POMDPs with large observation spaces, and showed provable separation between model-free and model-based methods in several settings. A major open problem is whether model-based algorithm can handle history-dependent $\pi_b$ and multi-step revealing (the ``?'' mark in Table~\ref{tab:summary}), and resolving this question will provide a more comprehensive picture of the landscape of OPE in POMDPs.

\section*{Acknowledgments}
Nan Jiang acknowledges funding support from NSF CNS-2112471, NSF CAREER IIS-2141781, Google Scholar Award, and Sloan Fellowship.

\bibliography{iclr2025_conference}
\bibliographystyle{iclr2025_conference}

\newpage

\appendix
\section{Proof of Theorem~\ref{thm:memoryless}}\label{app:fdvf}
\begin{proof}
We use the algorithm described in Section 3 of \citet{zhang2024curses}. Given the model class $\Mcal$, we construct the function classes $\Vcal$ and $\Xi$ needed by their algorithm. First, we apply the same pre-filtering as in the model-based algorithm to obtain $\Mcal'$. For each $M \in \Mcal'$, we include the corresponding future-dependent value function (FDVF) $V_{\Fcal}$ (their Definition 3) in $\Vcal$. For each model pair $M_1, M_2 \in \Mcal'$, we include $\Bcal_{M_1}^{\Hcal} V_{\Fcal, M_2}$ (their Definition 4) in $\Xi$, where $V_{\Fcal, M_2}$ is the FDVF in model $M_2$ and $\Bcal_{M_1}^{\Hcal}$ is the Bellman residual operator under model $M_1$. Hence, we have $|\Vcal| \leq |\Mcal|$ and $|\Xi| \leq |\Mcal|^2$. Meanwhile, their Assumptions 5 and 6 are naturally satisfied. From our Assumption \ref{assum:belief}, their Assumption 11 is also satisfied with $C_{\Hcal,2} \le \CH^2$. According to their Lemma 5, we have $\|V_{\Fcal}\|_{\infty} \le H\cfm$. Then, we invoke their Theorem 7 and obtain that w.p. $1-\delta$,
\begin{align*}
|J(\pi_e)-\wh{J}(\pi_e)| \le \Ocal \pa{H^2 \cfm \CH \sqrt{\frac{\CA \log(|\Mcal|/\delta)}{n}}}. \tag*{\qedhere}
\end{align*}
\end{proof}


\section{Calculation of the MDP Case}\label{app:calc_mdp}
For MDP, we have $o_h=s_h$ and $\OO_h \equiv I$. Hence, given $\tau_{h-1},s_h,a_h$,
\begin{align*}
&~\pi_e(a_h \mid \tau_{h-1},s_h) \left\|\pa{\B_h(s_h,a_h)-\wh{\B}_h(s_h,a_h)} \OO_h \bfb_{\Scal}(\tau_{h-1})\right\|_1 \\
=&~ \pi_e(a_h \mid \tau_{h-1},s_h) \left\|(\T_{h,a_h} - \wh{\T}_{h,a_h}) \diag(\OO_h(s_h \mid \cdot))\bfb_{\Scal}(\tau_{h-1})\right\|_1 \\
=&~ \pi_e(a_h \mid \tau_{h-1},s_h) \sum_{s'} \P_{M^\star}(s_h \mid \tau_{h-1}) \left|\P_{\wh{M}}(s' \mid s_h,a_h)-\P_{M^\star}(s' \mid s_h,a_h)\right| \\
=&~\P_{M^\star}^{\pi_e}(s_h,a_h \mid \tau_{h-1})\left\|\P_{\wh{M}}(\cdot \mid s_h,a_h)-\P_{M^\star}(\cdot \mid s_h,a_h)\right\|_1.
\end{align*}
Fix step $h$, the numerator in $\offs$ becomes 
\begin{align*}
\sum_{s_h,a_h} d_{h}^{\pi_e}(s_h,a_h) \left\|\P_{\wh{M}}(\cdot \mid s_h,a_h)-\P_{M^\star}(\cdot \mid s_h,a_h)\right\|_1.
\end{align*}
Similarly, the denominator is
\begin{align*}
\sum_{s_h,a_h} d_{h}^{\pi_b}(s_h,a_h) \left\|\P_{\wh{M}}(\cdot \mid s_h,a_h)-\P_{M^\star}(\cdot \mid s_h,a_h)\right\|_1.
\end{align*}
Then we have
\begin{align*}
\offs \le \max_{h} \max_{s_h,a_h} \frac{d_h^{\pi_e}(s_h,a_h)}{d_h^{\pi_b}(s_h,a_h)}.
\end{align*}

\section{Observable Operator Models} \label{app:oom}
We introduce the observable operator models (OOMs) \citep{jaeger2000observable}, which (for our purposes) can be viewed as a reparameterization of POMDPs. OOMs have seen wide use in recent POMDP literature (especially in the online setting~\citep{liu2022partially}), and play  a crucial role in our model-based analyses.

\paragraph{Single-step Revealing.} We first introduce the OOM used in environments satisfying single-step revealing condition. Given model parameters $M^\star=(\T,\OO,d_1)$, we write the initial observation distribution $\bfb_0$ and observable operators $\B$ as
\begin{align*}
&\bfb_0=\OO_1 d_1 \in \R^O, \\
&\B_h(o,a)=\OO_{h+1}\T_{h,a}\diag(\OO_h(o \mid \cdot))\ops \in \R^{O \times O},
\end{align*}
where $\T_{h,a} \in \mathbb{R}^{|\Scal_{h+1}| \times |\Scal_h|}$ with $[\T_{h,a}]_{ij}=\T_h(s_{h+1}=i \mid s_h=j,a_h=a)$. Different from the pseudo-inverse in \citet{liu2022partially}, we use the weighted version~\citep{zhang2024curses} defined as follows:
\begin{align*} 
\ops = \Sigma_{\Ocal,h}^{-1} \OO_h^\top W_h^{-1}, 
\end{align*} 
where $\Sigma_{\Ocal,h}$ and $W_h$ are defined in Assumption \ref{assum:single}.
With the OOM representaion, for any trajectory $\tau_h=(o_1,a_1,\cdots,o_h,a_h)$, the probability of generating $\tau_h$ with policy $\pi$ can be written as
\begin{align*}
\P_{M^\star}^{\pi}(\tau_h)=\pi(\tau_h) \cdot \pa{\mathbf{e}_{o_h}^\top \B_{h-1}(o_{h-1},a_{h-1}) \cdots \B_1(o_1,a_1) \bfb_0},
\end{align*}
where $\pi(\tau_h)=\prod_{h'=1}^h \pi(a_h \mid o_h,\tau_{h-1})$ is the action probability of generating $\tau_h$ given policy $\pi$. Moreover, let $\bfb(\tau_h)=\pa{\prod_{h'=1}^h \B_{h'}(o_{h'},a_{h'})}\bfb_0$, we have the following observation:
\begin{align*}
\bfb(\tau_h)=\P_{M^\star}(\tau_h) \OO_{h+1} \bfb_{\Scal}(\tau_h),
\end{align*}
where $\P_{M^\star}(\tau_h)={\mathbf{e}_{o_h}^\top \B_{h-1}(o_{h-1},a_{h-1}) \cdots \B_1(o_1,a_1) \bfb_0}$ is the environment probability of generating $\tau_h$. The relationship between $\bfb(\tau_h)$ and belief state vector $\bfb_{\Scal}(\tau_h)$ is crucial for developing coverage conditions on the belief state. 


\paragraph{Multi-step Revealing.} Next, we introduce the OOM in the multi-step revealing setting.  Recall that $f_h$ denotes the future after step $h$, and $\outmat$ is the outcome matrix, where the row indexed by $f_h$ represents its corresponding outcome vector. The observable operators are defined as
\begin{align*}
&\bfb_0=\outmatf d_1 \in \R^{|\Fcal_1|}, \\
&\B_h(o,a)=\outmatp \T_{h,a}\diag(\OO_h(o \mid \cdot))\mps \in \R^{|\Fcal_{h+1}| \times |\Fcal_h|}.
\end{align*}
Here $\mps = \Sigma_{\Fcal,h}^{-1} \outmat^\top Z_h^{-1}$ is the weighted pseudo-inverse and $\Sigma_{\Fcal,h}$, $Z_h$ are defined in Assumption \ref{assum:multi}.
For any history $\tau_{h-1}$ and future $f_h$, the probability of generating $(\tau_{h-1},f_h)$ with policy $\pi$ is written as
\begin{align*}
\P_{M^\star}^{\pi}(\tau_{h-1},f_h)=\pi(\tau_{h-1}) \cdot \pa{\mathbf{e}_{f_h}^\top \B_{h-1}(o_{h-1},a_{h-1}) \cdots \B_1(o_1,a_1) \bfb_0},
\end{align*}
Similar to the single-step case, we also have the following relation between $\bfb(\tau_h)$ and $\bfb_{\Scal}(\tau_h)$:
\begin{align*}
\bfb(\tau_h)=\P_{M^\star}(\tau_h) \outmatp \bfb_{\Scal}(\tau_h),
\end{align*}

\section{Proofs for Section \ref{sec:main}}\label{app:main_proof}
We first provide a lemma showing that a model can have accurate probability estimation of trajectories when it estimates the historical data well. The proof can be found in Appendix B of \citet{liu2022partially}.
\begin{lemma}[Restatement of Proposition 14 of \citet{liu2022partially}]\label{lem:mle}
There exists a universal constant $c$ such that for any  $\delta\in(0,1]$, with probability at least $1-\delta$, for all $M\in\Mcal$, it holds that
\begin{align*}
\left( \sum_{\tau\in \prod_{h=1}^H \pa{\Ocal_h \times \Acal}}\left| \P^{\pi_b}_{M}(\tau) - \P^{\pi_b}_{M^\star}(\tau) \right|\right)^2 
\le &  \frac{c\pa{\sum_{i=1}^n \log \frac{\P_{M^\star}^{\pi_b}(\tau^{(i)})}{\P_{M}^{\pi_b}(\tau^{(i)})}+\log \frac{|\Mcal|}{\delta}}}{n}
\end{align*}
\end{lemma}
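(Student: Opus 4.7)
The plan is the classical MLE-to-Hellinger argument, followed by conversion from Hellinger to $L_1$. Define the (single-sample) square-root likelihood ratio $Z_i(M) := \sqrt{\P_M^{\pi_b}(\tau^{(i)})/\P_{M^\star}^{\pi_b}(\tau^{(i)})}$ under $\tau^{(i)} \sim \P_{M^\star}^{\pi_b}$. A direct calculation gives
$$\E[Z_i(M)] = \sum_\tau \sqrt{\P_M^{\pi_b}(\tau)\,\P_{M^\star}^{\pi_b}(\tau)} = 1 - \tfrac{1}{2}H^2(\P_M^{\pi_b}, \P_{M^\star}^{\pi_b}) \le \exp\pa{-\tfrac{1}{2}H^2(\P_M^{\pi_b}, \P_{M^\star}^{\pi_b})},$$
where $H^2(\cdot,\cdot)$ denotes the squared Hellinger distance on full length-$H$ trajectories. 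By independence across the $n$ episodes, $\E\bra{\prod_{i=1}^n Z_i(M)} \le \exp\pa{-\tfrac{n}{2}H^2(\P_M^{\pi_b}, \P_{M^\star}^{\pi_b})}$.

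The second step is a one-sided Chernoff/Markov bound combined with a union bound over $\Mcal$: with probability at least $1 - \delta$, simultaneously for every $M \in \Mcal$,
$$\prod_{i=1}^n Z_i(M) \le \frac{|\Mcal|}{\delta}\,\exp\pa{-\tfrac{n}{2}H^2(\P_M^{\pi_b}, \P_{M^\star}^{\pi_b})}.$$
Taking logarithms and rearranging (and recalling $\log Z_i(M) = \tfrac{1}{2}\log(\P_M^{\pi_b}(\tau^{(i)})/\P_{M^\star}^{\pi_b}(\tau^{(i)}))$) yields
$$\tfrac{n}{2}\,H^2(\P_M^{\pi_b}, \P_{M^\star}^{\pi_b}) \le \log\frac{|\Mcal|}{\delta} + \tfrac{1}{2}\sum_{i=1}^n \log\frac{\P_{M^\star}^{\pi_b}(\tau^{(i)})}{\P_M^{\pi_b}(\tau^{(i)})},$$
a clean high-probability lower bound on the empirical log-likelihood ratio in terms of the squared Hellinger distance between the two trajectory distributions.

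Finally, I would convert this Hellinger bound to the $L_1$ bound claimed in the lemma via the elementary inequality $\pa{\sum_\tau |P(\tau) - Q(\tau)|}^2 \le 4\,H^2(P,Q)$, which follows from the factorization $|P-Q| = |\sqrt{P} - \sqrt{Q}|(\sqrt{P} + \sqrt{Q})$ together with Cauchy--Schwarz and $\sum_\tau(\sqrt{P} + \sqrt{Q})^2 \le 4$. Combining the two steps gives the stated inequality with a universal constant $c$ absorbing all numerical factors. The argument is entirely standard and presents no real obstacle; the only subtlety worth flagging is that the direction of the MGF bound $1 - x \le e^{-x}$ is exactly the one needed so that Markov's inequality yields a \emph{lower} bound on $H^2$ in terms of the observed log-likelihood ratios, which is what enables this to serve as a useful generalization guarantee for the MLE $\wh M$ that \emph{maximizes} the likelihood and therefore makes the RHS small.
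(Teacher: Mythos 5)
Your proposal is correct and is essentially the proof the paper relies on: the paper offers no independent argument for this lemma, deferring to Appendix B of \citet{liu2022partially}, and that proof is exactly your route---the square-root likelihood ratio $Z_i(M)$ with $\E\bigl[\prod_{i=1}^n Z_i(M)\bigr]\le \exp\bigl(-\tfrac{n}{2}H^2\bigr)$, Markov's inequality plus a union bound over $\Mcal$, and the Cauchy--Schwarz conversion $\bigl(\sum_\tau|P(\tau)-Q(\tau)|\bigr)^2\le 4H^2(P,Q)$. One caveat, inherited from the cited statement rather than introduced by you: what your argument literally establishes is $\bigl(\sum_\tau|\P^{\pi_b}_M(\tau)-\P^{\pi_b}_{M^\star}(\tau)|\bigr)^2\le (4A_M+8B)/n$ with $A_M=\sum_{i=1}^n\log\bigl(\P^{\pi_b}_{M^\star}(\tau^{(i)})/\P^{\pi_b}_M(\tau^{(i)})\bigr)$ and $B=\log(|\Mcal|/\delta)$, and the final absorption $4A_M+8B\le c(A_M+B)$ holds for a universal $c$ only when $A_M\ge 0$, so strictly speaking the $c(A_M+B)$ form is not implied for models whose empirical likelihood exceeds that of $M^\star$; this does not affect the paper's downstream use, since the lemma is applied only to the MLE $\wh M$ (where $A_{\wh M}\le 0$ because $M^\star\in\Mcal'$), and there one simply drops the nonpositive $A_{\wh M}$ term from your bound to obtain the same conclusion $\sum_\tau|\P^{\pi_b}_{\wh M}(\tau)-\P^{\pi_b}_{M^\star}(\tau)|\le\sqrt{c\log(|\Mcal|/\delta)/n}$.
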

Since the ground-truth $M^\star \in \Mcal$, for the MLE $\wh{M}$, we further have
\begin{align*}
\sum_{\tau\in \prod_{h=1}^H \pa{\Ocal_h \times \Acal}}\left| \P^{\pi_b}_{\wh{M}}(\tau) - \P^{\pi_b}_{M^\star}(\tau) \right|
\le &  \sqrt{\frac{c \log \frac{|\Mcal|}{\delta}}{n}}:=\epsmle.
\end{align*}
In addition, marginalizing two distributions will not increase the TV distance, for any $h \in [H-1]$, the following inequalities hold,
\begin{align}
\sum_{\tau_h,o_{h+1}} \left| 
\P_{\wh{M}}^{\pi_b}(\tau_h,o_{h+1}) - \P^{\pi_b}_{M^*}(\tau_h,o_{h+1})\right| &\le \epsmle, \label{eq:mle_single} \\
\sum_{\tau_h,f_{h+1}} \left| 
\P_{\wh{M}}^{\pi_b}(\tau_h,f_{h+1}) - \P^{\pi_b}_{M^*}(\tau_h,f_{h+1})\right| &\le \epsmle. \label{eq:mle_multi}
\end{align}
\subsection{Proof of Theorem \ref{thm:single}}\label{app:proof_single}
To begin with, we first state the following lemma for our operators $\B$, which is adapted from Lemma 32 of \citet{liu2022partially}. We use $\B_{h:j+1}$ to denote $\prod_{h'=j+1}^h \B_{h'}(o_{h'},a_{h'})$.
\begin{lemma}\label{lem:op_single}
For any  index $ 0 \le j < h \le H-1$, trajectory $\tau_j\in \prod_{h'=1}^j (\Ocal_{h'} \times \Acal)$, vector  $x\in\R^{O}$, policy $\pi$, and operator $\B$ satisfying Assumption \ref{assum:single},  we have 
$$\sum_{\tau_{h:j+1}}
		\|\B_{h:j+1} x \|_1 \times \pi(\tau_{h:j+1}\mid \tau_j) 
	\le \cf  \|x\|_1.
$$
\end{lemma}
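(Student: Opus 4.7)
The plan is to exploit the key identity $\OO_{h'}^{\dagger,w} \OO_{h'} = I$, which follows directly from $\Sigma_{\Ocal,h'} = \OO_{h'}^\top W_{h'}^{-1} \OO_{h'}$ and the definition $\OO_{h'}^{\dagger,w} = \Sigma_{\Ocal,h'}^{-1} \OO_{h'}^\top W_{h'}^{-1}$. Using this identity, applying $\B_{j+1}(o,a)$ to any $x$ produces a vector of the form $\OO_{j+2} z_1$ for some signed state vector $z_1$; subsequent applications of $\B_{h'}(o,a)$ then preserve this structure, with $\OO_{h'}^{\dagger,w} \OO_{h'}$ collapsing to the identity and avoiding further accumulation of $\cf$.

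Concretely, I would define $y_0 = x$ and $y_k = \B_{j+k}(o_{j+k}, a_{j+k}) y_{k-1}$, so that $y_{h-j} = \B_{h:j+1} x$. The base case gives $y_1 = \OO_{j+2} z_1$ with $z_1 = \T_{j+1, a_{j+1}} \diag(\OO_{j+1}(o_{j+1}|\cdot)) \OO_{j+1}^{\dagger,w} x$, while an easy induction shows $y_k = \OO_{j+k+1} z_k$ with $z_k = \T_{j+k, a_{j+k}} \diag(\OO_{j+k}(o_{j+k}|\cdot)) z_{k-1}$ for $k \ge 2$, where the inductive step uses $\OO_{j+k}^{\dagger,w} \OO_{j+k} = I$. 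Since each $\OO_{h'}$ is column-stochastic, $\|\OO_{h'}\|_1 = 1$, and hence $\|y_{h-j}\|_1 \le \|z_{h-j}\|_1$.

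The remaining work is to bound $\sum_{\tau_{h:j+1}} \pi(\tau_{h:j+1}|\tau_j) \|z_{h-j}\|_1$. I would sum from the innermost index outward: for each $k \ge 2$, fixing the earlier indices and using $\|\T_{j+k,a_{j+k}}\|_1 = 1$, I get $\|z_k\|_1 \le \sum_s \OO_{j+k}(o_{j+k}|s) \, |[z_{k-1}]_s|$; then summing against $\pi(a_{j+k}|\tau_{j+k-1}, o_{j+k})$ and using $\sum_{a_{j+k}} \pi = 1$ together with $\sum_{o_{j+k}} \OO_{j+k}(o_{j+k}|s) = 1$ yields the telescoping inequality $\sum_{o_{j+k},a_{j+k}} \pi \,\|z_k\|_1 \le \|z_{k-1}\|_1$. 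Iterating reduces the total sum to $\sum_{o_{j+1}, a_{j+1}} \pi(a_{j+1}|\tau_j, o_{j+1}) \,\|z_1\|_1$, and the same stochastic-matrix calculation on the $z_1$ layer shows this is at most $\|\OO_{j+1}^{\dagger,w} x\|_1$.

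It remains to show $\|\OO_{j+1}^{\dagger,w} x\|_1 \le \cf \|x\|_1$, which is where Assumption~\ref{assum:single} enters. Writing $\OO_{j+1}^{\dagger,w} = \Sigma_{\Ocal,j+1}^{-1} \OO_{j+1}^\top W_{j+1}^{-1}$, I would observe that $\OO_{j+1}^\top W_{j+1}^{-1}$ has $(s,o)$-entry $\OO_{j+1}(o|s) / \sum_{s'} \OO_{j+1}(o|s')$ and is thus column-stochastic, so $\|\OO_{j+1}^\top W_{j+1}^{-1}\|_1 = 1$; combined with $\|\Sigma_{\Ocal,j+1}^{-1}\|_1 \le \cf$ this gives the claim. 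Beyond careful index bookkeeping, I do not expect any real obstacle; the sole conceptual point is the $\OO^{\dagger,w} \OO = I$ collapse, which is exactly what prevents a blow-up of $\cf^{h-j}$ across the product.
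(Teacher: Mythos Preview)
Your argument is correct and is essentially the same as the paper's: both isolate a single factor of $\OO_{j+1}^{\dagger,w}$ (contributing the lone $\cf$) and show that the remaining policy-weighted $L_1$ sum collapses to $\|\OO_{j+1}^{\dagger,w}x\|_1$ via the stochastic structure of transitions and emissions. The paper does this more compactly by writing $\B_{h:j+1}x = \B_{h:j+1}\OO_{j+1}\OO_{j+1}^{\dagger,w}x$ and directly interpreting $\B_{h:j+1}\OO_{j+1}\mathbf{e}_i$ as a conditional-probability vector summing to $1$ under the $\pi$-weighting, whereas your explicit layer-by-layer unfolding via $\OO^{\dagger,w}\OO = I$ is the linear-algebraic expansion of that same probabilistic identity.
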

\begin{proof}
From the definition of $\B_{h:j+1}$, we have
$$
\B_{h:j+1} x = \B_{h:j+1} \OO_{j+1} \OO_{j+1}^{\dagger,w} x.
$$
Then, for any standard basis $\mathbf{e}_i \in\R^{S}$, we obtain
\begin{align*}
&	\sum_{\tau_{h:j+1}  } \| \B_{h:j+1} \OO_{j+1} \mathbf{e}_i \|_1 \times  \pi(\tau_{h:j+1}\mid \tau_j)  \\
=  & \sum_{o}\sum_{\tau_{h:j+1}  } \P_{M}(o_{h+1}=o,\tau_{h:j+1}\mid s_{j+1}=i)
\cdot \pi(\tau_{h:j+1}\mid \tau_j)\\
=  & \sum_{o} \sum_{\tau_{h:j+1}  } \P_{M}^\pi(o_{h+1}=o,\tau_{h:j+1} \mid \tau_j,s_{j+1}=i) = 1.
\end{align*}
Therefore,
\begin{align*}
&~\sum_{\tau_{h:j+1}}
		\|\B_{h:j+1} x \|_1 \times \pi(\tau_{h:j+1}\mid \tau_j)  \\
\le&~ \sum_{\tau_{h:j+1}  } \| \B_{h:j+1} \OO_{j+1} \|_1   \|\OO_{j+1}^{\dagger,w} x\|_1 \times \pi(\tau_{h:j+1}\mid \tau_j)  \\
\le&~ \|\OO_{j+1}^{\dagger,w} x\|_1 \le \cf  \|x\|_1.
\end{align*}
The last step is from Assumption \ref{assum:single} and $\|\OO_{j+1}^{\dagger,w}\|_1 \le \|\Sigma_{\Ocal,h}^{-1}\|_1 \|\OO_h^\top W_h^{-1}\|_1 \le \cf$.
\end{proof}
Then, we prove Theorem \ref{thm:single}. When the context is clear, we omit the $(o_h,a_h)$ in $\B_h(o_h,a_h)$.
\begin{proof}
We first bound the single-step estimation error of $\pi_b$. For any $h \in [H-1]$, we have
\begin{align}
&~\sum_{\tau_h} \pi_b(\tau_h) \times 
	\left\| \left(\wh{\B}_h(o_h,a_h) - \B_h(o_h,a_h)\right)\bfb(\tau_{h-1})\right\|_1 \nonumber\\
\le &  \sum_{ \tau_h} \pi_b(\tau_h) \times
	\left\| \wh{\B}_h(o_h,a_h)\wh{\bfb}(\tau_{h-1}) - \B_h(o_h,a_h)\bfb(\tau_{h-1})\right\|_1 \nonumber \\
+ & \sum_{ \tau_h} \pi_b(\tau_h) \times 
\left\| \wh{\B}_h(o_h,a_h) \left(\wh{\bfb}(\tau_{h-1})-\bfb(\tau_{h-1})\right)\right\|_1 \nonumber \\
\le & 2 \cf \epsmle. \label{eq:single_mle_pib}
\end{align}
For the last step, the first term is bounded by $\epsmle$ due to Eq.~\eqref{eq:mle_single}. The second term is from
\begin{align*}
&~\sum_{ \tau_h} \pi_b(\tau_h) \times 
\left\| \wh{\B}_h(o_h,a_h) \left(\wh{\bfb}(\tau_{h-1})-\bfb(\tau_{h-1})\right)\right\|_1 \\
=&~\sum_{\tau_{h-1}} \pi_b(\tau_{h-1}) \sum_{o_h,a_h} \pi_b(a_h \mid \tau_{h-1},o_h)\left\| \wh{\B}_h(o_h,a_h) \left(\wh{\bfb}(\tau_{h-1})-\bfb(\tau_{h-1})\right)\right\|_1 \\
\le&~ \cf \sum_{\tau_{h-1}} \pi_b(\tau_{h-1}) \left\| \wh{\bfb}(\tau_{h-1})-\bfb(\tau_{h-1})\right\|_1 \tag{Lemma \ref{lem:op_single}} \\
\le&~ \cf \epsmle. \tag{Eq.~\eqref{eq:mle_single}} 
\end{align*}
Then we consider the evaluation error of $\pi_e$. We decompose the evaluation error into single-step error
\begin{align*}
\left|J(\pi_e)- J_{\wh{M}}(\pi_e)\right|&=\sum_{h=1}^H \left|\sum_{o_h} (\P^{\pi_e}_{\wh{M}}(o_h)-\P^{\pi_e}_{M^*}(o_h))r(o_h)\right| \\
&\le \sum_{h=1}^H\sum_{o_h} \left|\P^{\pi_e}_{\wh{M}}(o_h)-\P^{\pi_e}_{M^*}(o_h)\right| \\
&=\sum_{h=1}^H \sum_{o_h} \left|\sum_{\tau_{h-1}}(\P^{\pi_e}_{\wh{M}}(o_h,\tau_{h-1})-\P^{\pi_e}_{M^*}(o_h,\tau_{h-1}))\right| \\
&=\sum_{h=1}^H \sum_{o_h} \left|\mathbf{e}^\top_{o_h} \pa{\sum_{\tau_{h-1}} \pa{\wh{\B}_{h-1} \cdots \wh{\B}_1 \wh{\bfb}_0 - \B_{h-1} \cdots \B_1 \bfb_0} \times \pi_e(\tau_{h-1})}  \right| \\
&=\sum_{h=1}^H \left\|\sum_{\tau_{h-1}} \pa{\Bhat_{h-1} \cdots \Bhat_1 \wh{\bfb}_0 - \B_{h-1} \cdots \B_1 \bfb_0} \times \pi_e(\tau_{h-1})  \right\|_1.
\end{align*}
For the case $h=1$, according to Eq.~\eqref{eq:mle_single}, we have $\sum_{o_1} \left|\P^{\pi_e}_{\wh{M}}(o_1)-\P^{\pi_e}_{M^*}(o_1)\right| \le \epsmle$. For $2 \le h \le H$, we use the telescoping and obtain the following equality for any $\tau_{h-1}$.
\begin{align}\label{eq:tele}
\wh{\B}_{h-1} \cdots \wh{\B}_1 \wh{\bfb}_0 - \B_{h-1} \cdots \B_1 \bfb_0=\sum_{j=1}^{h-1} \wh{\B}_{h-1:j+1}(\wh{\B}_j-\B_j)\bfb(\tau_{j-1})+\wh{\B}_{h-1:1}(\wh{\bfb}_0-\bfb_0).
\end{align}  
Therefore, for a fixed step $2 \le h \le H$, we have
\begin{align*}
&~\left\|\sum_{\tau_{h-1}} \pa{\Bhat_{h-1} \cdots \Bhat_1 \wh{\bfb}_0 - \B_{h-1} \cdots \B_1 \bfb_0} \times \pi_e(\tau_{h-1})  \right\|_1. \\
\le&~\left\|\sum_{\tau_{h-1}} \pi_e(\tau_{h-1}) \Bhat_{h-1:1}(\wh{\bfb}_0-\bfb_0)\right\|_1+
\left\|\sum_{\tau_{h-1}} \pi_e(\tau_{h-1}) \sum_{j=1}^{h-1} \Bhat_{h-1:j+1}(\Bhat_j-\B_j)\bfb(\tau_{j-1})\right\|_1.
\end{align*}
For the first term, we have
\begin{align*}
\left\|\sum_{\tau_{h-1}} \pi_e(\tau_{h-1}) \Bhat_{h-1:1}(\wh{\bfb}_0-\bfb_0)\right\|_1 &\le \sum_{\tau_{h-1}} \left\|\Bhat_{h-1:1}(\bhat_0-\bfb_0)\right\|_1 \times \pi_e(\tau_{h-1}) \\
&\le \cf \|\wh{\bfb}_0-\bfb_0\|_1 \tag{Lemma \ref{lem:op_single}} \\
&\le \cf \epsmle. \tag{Eq.~\eqref{eq:mle_single}}
\end{align*}
For the second term, we have
\begin{align}
&~\left\|\sum_{\tau_{h-1}} \pi_e(\tau_{h-1}) \sum_{j=1}^{h-1} \Bhat_{h-1:j+1}(\Bhat_j-\B_j)\bfb(\tau_{j-1})\right\|_1 \nonumber \\
=&~ \left\|\sum_{j=1}^{h-1}  \sum_{\tau_j} \pi_e(\tau_{j}) \sum_{\tau_{h-1:j+1}}  \pi_e(\tau_{h-1:j+1} \mid \tau_j) \times \Bhat_{h-1:j+1} (\Bhat_j-\B_j)\bfb(\tau_{j-1})\right\|_1 \nonumber \\
=&~\left\|\sum_{j=1}^{h-1}  \sum_{\tau_j} \pi_e(\tau_{j}) \mathbf{P}_{\tau_j} \wh{\OO}^{\dagger,w}_{j+1} (\Bhat_j-\B_j)\bfb(\tau_{j-1})\right\|_1, \label{eq:ope_error}
\end{align}
where 
$$\mathbf{P}_{\tau_j}=\sum_{\tau_{h-1:j+1}}  \pi_e(\tau_{h-1:j+1} \mid \tau_j) \times \Bhat_{h-1:j+1} \wh{\OO}_{j+1}$$ and we observe that 
\begin{align*}
[\mathbf{P}_{\tau_j}]_{ik}&=\sum_{\tau_{h-1:j+1}}  \pi_e(\tau_{h-1:j+1} \mid \tau_j) \P_{\wh{M}} (o_{h}=i,o_{h-1:j+1} \mid s_{j+1}=k, a_{h:j+1}) \\
&=\P^{\pi_e}_{\wh{M}}(o_{h}=i \mid s_{j+1}=k, \tau_j).
\end{align*}
Therefore $\|\mathbf{P}_{\tau_j}\|_1=1$. Since $\|\wh{\OO}_{j+1}^{\dagger,w}\|_1 \le \cf$, we further upper bound Eq.~\eqref{eq:ope_error} as
\begin{align*}
&~\left\|\sum_{j=1}^{h-1}  \sum_{\tau_j} \pi_e(\tau_{j}) \mathbf{P}_{\tau_j} \wh{\OO}^{\dagger,w}_{j+1} (\Bhat_j-\B_j)\bfb(\tau_{j-1})\right\|_1 \\
\le&~\cf \sum_{j=1}^{h-1}\sum_{\tau_j} \pi_e(\tau_j) \left\|(\Bhat_j-\B_j)\bfb(\tau_{j-1})\right\|_1.
\end{align*}
Taking summation over $h$ from $1$ to $H$, we obtain
\begin{align}\label{eq:ope_e1}
\left|J(\pi_e)- J_{\wh{M}}(\pi_e)\right| \le H \cf \epsmle+H \cf \sum_{j=1}^{H-1}\sum_{\tau_j} \pi_e(\tau_j) \left\|(\Bhat_j-\B_j)\bfb(\tau_{j-1})\right\|_1.
\end{align}
From Eq.~\eqref{eq:single_mle_pib}, we know that $~\sum_{\tau_h} \pi_b(\tau_h) \times 
	\left\| \left(\wh{\B}_h - \B_h\right)\bfb(\tau_{h-1})\right\|_1 \le 2 \cf \epsmle$. Therefore, we consider the ratio between estimation error of $\pi_e$ and estimation error of $\pi_b$ for step $h \in [H-1]$.
\begin{align*}
&~\frac{\sum_{\tau_h} \pi_e(\tau_{h}) \times \left\|(\Bhat_h-\B_h)\bfb(\tau_{h-1})\right\|_1}{\sum_{\tau_h} \pi_b(\tau_h) \times 
	\left\| \left(\wh{\B}_h - \B_h\right)\bfb(\tau_{h-1})\right\|_1} \\
=&~\frac{\sum_{\tau_{h}} \pi_e(\tau_{h}) \P_{M^*}(\tau_{h-1}) \times \left\|(\Bhat_h-\B_h)\OO_h\bfb_{\Scal}(\tau_{h-1})\right\|_1}{\sum_{\tau_{h}} \pi_b(\tau_{h}) \P_{M^*}(\tau_{h-1}) \times \left\|(\Bhat_h-\B_h)\OO_h\bfb_{\Scal}(\tau_{h-1})\right\|_1} \\
=&~\frac{\sum_{o_h,a_h}\E_{\pi_e}\bra{\pi_e(a_h \mid \tau_{h-1},o_h)\|(\wh{\B}_h-\B_h)\OO_h \bfb_{\Scal}(\tau_{h-1})\|_1}}{\sum_{o_h,a_h}\E_{\pi_b}\bra{\pi_b(a_h \mid \tau_{h-1},o_h)\|(\wh{\B}_h-\B_h)\OO_h \bfb_{\Scal}(\tau_{h-1})\|_1}}.
\end{align*}
In the first step, we use the relation $\bfb(\tau_h)=\P_{M^\star}(\tau_h) \OO_{h+1} \bfb_{\Scal}(\tau_h)$. Combining it with Eq.~\eqref{eq:ope_e1}, we have
\begin{align*}
\left|J(\pi_e)- J_{\wh{M}}(\pi_e)\right| \le H \cf \epsmle + 2H^2 \cf^2 \offs \epsmle,
\end{align*}
where 
\begin{align*}
\offs:=\max_{h \in [H-1]}\frac{\sum_{o_h,a_h}\E_{\pi_e}\bra{\pi_e(a_h \mid \tau_{h-1},o_h)\|(\wh{\B}_h-\B_h)\OO_h \bfb_{\Scal}(\tau_{h-1})\|_1}}{\sum_{o_h,a_h}\E_{\pi_b}\bra{\pi_b(a_h \mid \tau_{h-1},o_h)\|(\wh{\B}_h-\B_h)\OO_h \bfb_{\Scal}(\tau_{h-1})\|_1}}.
\end{align*}
We have proved the first part of the theorem. Next, we need to upper bound $\offs$. Let $x_l(o_h,a_h) \in \R^{|\Scal_h|}$ be the $l$-th row of $\bra{(\wh{\B}_h(o_h,a_h) - \B_h(o_h,a_h)\OO_h}$, then we have
\begin{align*}
\offs &\le \max_{h \in [H-1]} \max_{o_h,a_h,l} \frac{\E_{\pi_e}\bra{\pi_e(a_h \mid \tau_{h-1},o_h)|x_l(o_h,a_h)^\top \bfb_{\Scal}(\tau_{h-1})|}}{\E_{\pi_b}\bra{\pi_b(a_h \mid \tau_{h-1},o_h)|x_l(o_h,a_h)^\top \bfb_{\Scal}(\tau_{h-1})|}} \\
&\le \max_{h \in [H-1]} \max_{o_h,a_h,l} \CA \frac{ \E_{\pi_e}|x_l(o_h,a_h)^\top \bfb_{\Scal}(\tau_{h-1})|}{\E_{\pi_b}|x_l(o_h,a_h)^\top \bfb_{\Scal}(\tau_{h-1})|}.
\end{align*}
To lower bound the denominator, we have
\begin{align}
\E_{\pi_b}
	\left| x_l(o_h,a_h)^\top \bfb_{\Scal}(\tau_{h-1})\right|&=\E_{\pi_b}
	\frac{\pa{x_l(o_h,a_h)^\top \bfb_{\Scal}(\tau_{h-1})}^2}{|x_l(o_h,a_h)^\top \bfb_{\Scal}(\tau_{h-1})|} \nonumber \\
  &\ge \E_{\pi_b}
	\frac{\pa{x_l(o_h,a_h)^\top \bfb_{\Scal}(\tau_{h-1})}^2}{\|x_l(o_h,a_h)\|_{2} \|\bfb_{\Scal}(\tau_{h-1})\|_2}.  \nonumber \\
 &\ge \frac{\sigma_{\min}(\SigH) \|x_l(o_h,a_h)\|_2^2}{\|x_l(o_h,a_h)\|_{2} \|\bfb_{\Scal}(\tau_{h-1})\|_2} \nonumber \\
 &\ge \sigma_{\min}(\SigH) \|x_l(o_h,a_h)\|_{2}. \label{eq:offs_1}
\end{align}
Here $\SigH=\E_{\pi_b}\bra{\bfb_{\Scal}(\tau_{h-1})\bfb_{\Scal}(\tau_{h-1})^\top}$. The first inequality is from Cauchy-Schwarz inequality and the last inequality is because $\|\bfb_{\Scal}(\tau_{h-1})\|_2 \le 1$. The numerator is upper bounded by 
\begin{align}\label{eq:offs_2}
\left|\E_{\pi_e} \bra{x_l(o_h,a_h)^\top \bfb_{\Scal}(\tau_{h-1})}\right| \le \|x_l(o_h,a_h)\|_2 \|\E_{\pi_e} \bfb_{\Scal}(\tau_{h-1})\|_2.
\end{align}
Combining Eq.~\eqref{eq:offs_1} and  Eq.~\eqref{eq:offs_2},
\begin{align*}
\offs \le \CA \CH.
\end{align*}
The proof is completed.
\end{proof}
\subsection{Proof of Theorem \ref{thm:multi}}\label{app:proof_multi}
As in Appendix \ref{app:proof_single}, we first show a lemma for the operators $\B$ in the multi-step outcome scenario.

\begin{lemma}\label{lem:op_multi}
For any  index $ 0 \le j < h \le H-1$, trajectory $\tau_j\in \prod_{h'=1}^j (\Ocal_{h'} \times \Acal)$, vector  $x\in\R^{|\Fcal_{j+1}|}$, policy $\pi$, and operator $\B$ satisfying Assumption \ref{assum:multi},  we have 
$$\sum_{\tau_{h:j+1}}
		\|\B_{h:j+1} x \|_1 \times \pi(\tau_{h:j+1}\mid \tau_j) 
	\le \cfm  \|x\|_1.
$$
\end{lemma}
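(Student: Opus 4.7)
The plan is to mirror the proof of Lemma \ref{lem:op_single}, with the outcome matrix $\outmatj$ and its weighted pseudo-inverse $\mpsj$ playing the roles of $\OO_{j+1}$ and $\OO_{j+1}^{\dagger,w}$. The central identity driving everything is
\[
\mpsj \outmatj \;=\; \Sigma_{\Fcal,j+1}^{-1} \outmatj^\top Z_{j+1}^{-1} \outmatj \;=\; I,
\]
which is immediate from the definition of $\Sigma_{\Fcal,j+1}$ in Assumption \ref{assum:multi} and is the multi-step analog of $\OO_{j+1}^{\dagger,w} \OO_{j+1} = I$.

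Using this, since $\B_{j+1}$ ends with $\mpsj$ on the right, I would first write $\B_{h:j+1} = \B_{h:j+1} \outmatj \mpsj$ and set $y := \mpsj x \in \R^{|\Scal_{j+1}|}$. Expanding $y = \sum_i y_i \mathbf{e}_i$ and applying the triangle inequality reduces the lemma to showing the key identity
\[
\sum_{\tau_{h:j+1}} \|\B_{h:j+1} \outmatj \mathbf{e}_i\|_1 \cdot \pi(\tau_{h:j+1} \mid \tau_j) \;=\; 1 \qquad \text{for every standard basis } \mathbf{e}_i.
\]
To establish this, iteratively applying $U_{\Fcal,j'+1}^{\dagger,w} U_{\Fcal,j'+1} = I$ telescopes $\B_{h:j+1} \outmatj$ down to $U_{\Fcal,h+1} \prod_{h'=j+1}^{h} \T_{h',a_{h'}} \diag(\OO_{h'}(o_{h'} \mid \cdot))$. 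The $f_{h+1}$-th entry of this product applied to $\mathbf{e}_i$ is then the joint probability of observing $(o_{h:j+1}, f_{h+1})$ given $s_{j+1} = i$ and the actions in $\tau_{h:j+1}$, where $f_{h+1}$ is rolled out under $\pi_b$ (the $\pi_b$ factor enters precisely through the row $\mathbf{u}(f_{h+1})$ of $\outmatp$). Multiplying by $\pi(\tau_{h:j+1}\mid \tau_j)$ absorbs the remaining action-probability factors on the middle segment, and summing over $f_{h+1}$ and $\tau_{h:j+1}$ yields $1$.

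Finally, it remains to bound $\|y\|_1 = \|\mpsj x\|_1 \le \cfm \|x\|_1$. This follows from the submultiplicativity estimate $\|\mpsj\|_1 \le \|\Sigma_{\Fcal,j+1}^{-1}\|_1 \cdot \|\outmatj^\top Z_{j+1}^{-1}\|_1 \le \cfm \cdot 1$, where the second factor equals one because $\outmatj^\top Z_{j+1}^{-1}$ is entrywise non-negative and each of its columns sums to one by the very construction $Z_{j+1} = \diag(\outmatj \mathbf{1}_{\Scal_{j+1}})$. The main technical obstacle relative to the single-step case is the telescoping/probabilistic-interpretation step above, since $\outmatp$ already folds the behavior policy $\pi_b$ into an entire multi-step future rather than a single observation; but the $\pi_b$ factors line up cleanly with the rows of $U_{\Fcal,h+1}$ after the cancellations, so the argument carries over essentially verbatim.
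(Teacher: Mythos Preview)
Your proposal is correct and follows essentially the same route as the paper's proof: insert $\outmatj\mpsj$ via the identity $\mpsj\outmatj=I$, establish that $\sum_{\tau_{h:j+1}}\|\B_{h:j+1}\outmatj\mathbf{e}_i\|_1\,\pi(\tau_{h:j+1}\mid\tau_j)=1$ through the probabilistic interpretation (the $\pi_b$ factors enter via $\outmatp$ exactly as you describe), and finish with $\|\mpsj\|_1\le\cfm$ using column-stochasticity of $\outmatj^\top Z_{j+1}^{-1}$. Your explicit triangle-inequality expansion $y=\sum_i y_i\mathbf{e}_i$ is in fact slightly more careful than the paper's presentation, which passes through $\|\B_{h:j+1}\outmatj\|_1$ directly.
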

\begin{proof}
From the definition of $\B_{h:j+1}$, we have
$$
\B_{h:j+1} x = \B_{h:j+1} \outmatj \mpsj x.
$$
Then, for any standard basis $\mathbf{e}_i \in\R^{S}$, we obtain
\begin{align*}
&	\sum_{\tau_{h:j+1}  } \| \B_{h:j+1} \outmatj \mathbf{e}_i \|_1 \times  \pi(\tau_{h:j+1}\mid \tau_j)  \\
=  & \sum_{\tau_{h:j+1}  }  \sum_{f_{h+1}} \P^{\pi_b}_{M}(f_{h+1},o_{h:j+1}\mid s_{j+1}=i, a_{h:j+1})
\cdot \pi(\tau_{h:j+1}\mid \tau_j)\\
=  & \sum_{\tau_{h:j+1}  } \sum_{f_{h+1}} \P_{M}^{a_{h:j+1} \sim \pi,a_{h+1:} \sim \pi_b}(f_{h+1},\tau_{h:j+1} \mid \tau_j,s_{j+1}=i) = 1.
\end{align*}
Therefore
\begin{align*}
&~\sum_{\tau_{h:j+1}}
		\|\B_{h:j+1} x \|_1 \times \pi(\tau_{h:j+1}\mid \tau_j)  \\
\le&~ \sum_{\tau_{h:j+1}  } \| \B_{h:j+1} \outmatj\|_1   \|\mpsj x\|_1 \times \pi(\tau_{h:j+1}\mid \tau_j) \\
\le&~ \cfm  \|x\|_1.
\end{align*}
The last step is from Assumption \ref{assum:multi} and $\|\mpsj\|_1 \le \|\Sigma_{\Fcal,j+1}^{-1}\|_1 \|\outmatj^\top Z_{j+1}^{-1}\|_1=\cfm$.
\end{proof}
Then, we prove Theorem \ref{thm:multi}.
\begin{proof}
We first observe that for any $h \in[H-1]$,
\begin{align}
 \sum_{\tau_h} \pi_b(\tau_h) \times \|\wh{\bfb}(\tau_h)-\bfb(\tau_h)\|_1 =\sum_{\tau_h,f_{h+1}} \left| 
\P_{\wh{M}}^{\pi_b}(\tau_h,f_{h+1}) - \P^{\pi_b}_{M^*}(\tau_h,f_{h+1})\right| \le \epsmle .\nonumber
\end{align}
The inequality is from Eq.~\eqref{eq:mle_multi}. Hence, using Lemma \ref{lem:op_multi}, for the estimation error of $\pi_b$ at step $h \in [H-1]$, we have 
\begin{align}
&~\sum_{\tau_h} \pi_b(\tau_h) \times 
	\left\| \left(\wh{\B}_h(o_h,a_h) - \B_h(o_h,a_h)\right)\bfb(\tau_{h-1})\right\|_1 \nonumber\\
\le &  \sum_{ \tau_h} \pi_b(\tau_h) \times
	\left\| \wh{\B}_h(o_h,a_h)\wh{\bfb}(\tau_{h-1}) - \B_h(o_h,a_h)\bfb(\tau_{h-1})\right\|_1 \nonumber \\
+ & \sum_{ \tau_h} \pi_b(\tau_h) \times 
\left\| \wh{\B}_h(o_h,a_h) \left(\wh{\bfb}(\tau_{h-1})-\bfb(\tau_{h-1})\right)\right\|_1 \nonumber \\
\le & 2 \cfm \epsmle. \label{eq:pib_multi}
\end{align}
Next, we consider the reward of $\pi_e$ at each step. For step $h$, we construct policy $\mu_h$, which follows $\pi_e$ until step $h-1$ and follows $\pi_b$ from step $h$. It is clear that $\mu_h$ has the same expected reward as $\pi_e$ at step $h$. Let $r_h(f_h)$ be the reward of $f_h$ at step $h$, we have
\begin{align*}
\left|J(\pi_e)- J_{\wh{M}}(\pi_e)\right|&=\sum_{h=1}^{H} \left|\sum_{f_{h}} (\P^{\mu_h}_{\wh{M}}(f_h)-\P^{\mu_h}_{M^*}(f_h))r_h(f_h)\right| \\
&\le \sum_{h=1}^H \sum_{f_h} \left|\mathbf{e}^\top_{f_h} \pa{\sum_{\tau_{h-1}} \pa{\wh{\B}_{h-1} \cdots \wh{\B}_1 \wh{\bfb}_0 - \B_{h-1} \cdots \B_1 \bfb_0} \times \pi_e(\tau_{h-1})}  \right| \\
&=\sum_{h=1}^H  \left\|\sum_{\tau_{h-1}} \pa{\Bhat_{h-1} \cdots \Bhat_1 \bhat_0 - \B_{h-1} \cdots \B_1 \bfb_0} \times \pi_e(\tau_{h-1})  \right\|_1.
\end{align*}
Similar to the derivation of Theorem \ref{thm:single}, for the case $h=1$, according to Eq.~\eqref{eq:mle_multi}, we have $\sum_{f_1} \left|\P^{\pi_b}_{\wh{M}}(f_1)-\P^{\pi_b}_{M^*}(f_1)\right| \le \epsmle$. For $2 \le h \le H$, we use the telescoping in Eq.~\eqref{eq:tele} and obtain
\begin{align*}
&~\left\|\sum_{\tau_{h-1}} \pa{\Bhat_{h-1} \cdots \Bhat_1 \wh{\bfb}_0 - \B_{h-1} \cdots \B_1 \bfb_0} \times \pi_e(\tau_{h-1})  \right\|_1. \\
\le&~\left\|\sum_{\tau_{h-1}} \pi_e(\tau_{h-1}) \Bhat_{h-1:1}(\wh{\bfb}_0-\bfb_0)\right\|_1+
\left\|\sum_{\tau_{h-1}} \pi_e(\tau_{h-1}) \sum_{j=1}^{h-1} \Bhat_{h-1:j+1}(\Bhat_j-\B_j)\bfb(\tau_{j-1})\right\|_1.
\end{align*}
For the first term, we have
\begin{align*}
\left\|\sum_{\tau_{h-1}} \pi_e(\tau_{h-1}) \Bhat_{h-1:1}(\wh{\bfb}_0-\bfb_0)\right\|_1 &\le \sum_{\tau_{h-1}} \left\|\Bhat_{h-1:1}(\bhat_0-\bfb_0)\right\|_1 \times \pi_e(\tau_{h-1}) \\
&\le \cfm \|\wh{\bfb}_0-\bfb_0\|_1 \tag{Lemma \ref{lem:op_multi}} \\
&\le \cfm \epsmle. \tag{Eq.~\eqref{eq:mle_multi}}
\end{align*}
For the second term, we have
\begin{align}
&~\left\|\sum_{\tau_{h-1}} \pi_e(\tau_{h-1}) \sum_{j=1}^{h-1} \Bhat_{h-1:j+1}(\Bhat_j-\B_j)\bfb(\tau_{j-1})\right\|_1 \nonumber \\
=&~ \left\|\sum_{j=1}^{h-1}  \sum_{\tau_j} \pi_e(\tau_{j}) \sum_{\tau_{h-1:j+1}}  \pi_e(\tau_{h-1:j+1} \mid \tau_j) \times \Bhat_{h-1:j+1} (\Bhat_j-\B_j)\bfb(\tau_{j-1})\right\|_1 \nonumber \\
=&~\left\|\sum_{j=1}^{h-1}  \sum_{\tau_j} \pi_e(\tau_{j}) \mathbf{P}_{\tau_j} \whmpsj (\Bhat_j-\B_j)\bfb(\tau_{j-1})\right\|_1, \label{eq:ope_error_multi}
\end{align}
where 
$$\mathbf{P}_{\tau_j}=\sum_{\tau_{h-1:j+1}}  \pi_e(\tau_{h-1:j+1} \mid \tau_j) \times \Bhat_{h-1:j+1} \whoutmatj$$ and we observe that 
\begin{align*}
[\mathbf{P}_{\tau_j}]_{ik}&=\sum_{\tau_{h-1:j+1}}  \pi_e(\tau_{h-1:j+1} \mid \tau_j) \P^{\pi_b}_{\wh{M}} (f_{h}=i,o_{h-1:j+1} \mid s_{j+1}=k, a_{h:j+1}) \\
&=\P^{a_{h-1:j+1} \sim \pi_e,a_{h:} \sim \pi_b}_{\wh{M}}(f_{h}=i \mid s_{j+1}=k, \tau_j).
\end{align*}
Therefore $\|\mathbf{P}_{\tau_j}\|_1=1$. Since $\|\whmpsj\|_1 \le \cfm$, we further upper bound Eq.~\eqref{eq:ope_error_multi} as
\begin{align*}
&~\left\|\sum_{j=1}^{h-1}  \sum_{\tau_j} \pi_e(\tau_{j}) \mathbf{P}_{\tau_j} \whmpsj (\Bhat_j-\B_j)\bfb(\tau_{j-1})\right\|_1 \\
\le&~\cfm \sum_{j=1}^{h-1}\sum_{\tau_j} \pi_e(\tau_j) \left\|(\Bhat_j-\B_j)\bfb(\tau_{j-1})\right\|_1.
\end{align*}
Taking summation over $h$ from $1$ to $H$, we get
\begin{align}\label{eq:ope_e2}
\left|J(\pi_e)- J_{\wh{M}}(\pi_e)\right| \le H \cf \epsmle+H \cfm \sum_{j=1}^{H-1}\sum_{\tau_j} \pi_e(\tau_j) \left\|(\Bhat_j-\B_j)\bfb(\tau_{j-1})\right\|_1.
\end{align}
According to Eq.~\eqref{eq:pib_multi}, we have $~\sum_{\tau_h} \pi_b(\tau_h) \times 
	\left\| \left(\wh{\B}_h - \B_h\right)\bfb(\tau_{h-1})\right\|_1 \le 2 \cfm \epsmle$. Therefore, the ratio between estimation error of $\pi_e$ and estimation error of $\pi_b$ for step $h \in [H-1]$ is written as:
\begin{align*}
&~\frac{\sum_{\tau_h} \pi_e(\tau_{h}) \times \left\|(\Bhat_h-\B_h)\bfb(\tau_{h-1})\right\|_1}{\sum_{\tau_h} \pi_b(\tau_h) \times 
	\left\| \left(\wh{\B}_h - \B_h\right)\bfb(\tau_{h-1})\right\|_1} \\
=&~\frac{\sum_{\tau_{h}} \pi_e(\tau_{h}) \P_{M^*}(\tau_{h-1}) \times \left\|(\Bhat_h-\B_h)\outmat\bfb_{\Scal}(\tau_{h-1})\right\|_1}{\sum_{\tau_{h}} \pi_b(\tau_{h}) \P_{M^*}(\tau_{h-1}) \times \left\|(\Bhat_h-\B_h)\outmat\bfb_{\Scal}(\tau_{h-1})\right\|_1} \\
=&~\frac{\sum_{o_h,a_h}\E_{\pi_e}\bra{\pi_e(a_h \mid \tau_{h-1},o_h)\|(\wh{\B}_h-\B_h)\outmat \bfb_{\Scal}(\tau_{h-1})\|_1}}{\sum_{o_h,a_h}\E_{\pi_b}\bra{\pi_b(a_h \mid \tau_{h-1},o_h)\|(\wh{\B}_h-\B_h)\outmat \bfb_{\Scal}(\tau_{h-1})\|_1}}.
\end{align*}
In the first step, we use the relation $\bfb(\tau_h)=\P_{M^\star}(\tau_h) \outmatp \bfb_{\Scal}(\tau_h)$. Combining it with Eq.~\eqref{eq:ope_e2}, we have
\begin{align*}
\left|J(\pi_e)- J_{\wh{M}}(\pi_e)\right| \le H \cfm \epsmle + 2H^2 \cfm^2 \offm \epsmle,
\end{align*}
where 
\begin{align*}
\offm:=\max_{h \in [H-1]}\frac{\sum_{o_h,a_h}\E_{\pi_e}\bra{\pi_e(a_h \mid \tau_{h-1},o_h)\|(\wh{\B}_h-\B_h)\outmat \bfb_{\Scal}(\tau_{h-1})\|_1}}{\sum_{o_h,a_h}\E_{\pi_b}\bra{\pi_b(a_h \mid \tau_{h-1},o_h)\|(\wh{\B}_h-\B_h)\outmat \bfb_{\Scal}(\tau_{h-1})\|_1}}.
\end{align*}
Similar to Theorem \ref{thm:single}, we further have $\offm \le \CA \CH$, the proof is completed.
\end{proof}
\section{Proofs for Section \ref{sec:discuss}}
\label{app:proof_mismatch}
\begin{proof}[Proof of Theorem \ref{thm:multi_mismatch}]
Recall that 
\begin{align*}
\epsapprox:=\min_{M \in \Mcal} \frac{1}{n} \sum_{i=1}^n \pa{\log \P_{M^\star}^{\pi}(\tau_H^{(i)}) - \log \P_{M}^{\pi}(\tau_H^{(i)})}.
\end{align*}
By invoking Lemma \ref{lem:mle}, with probability at least $1-\delta$, we have
\begin{align*}
\sum_{\tau\in \prod_{h=1}^H \pa{\Ocal_h \times \Acal}}\left| \P^{\pi_b}_{\wh{M}}(\tau) - \P^{\pi_b}_{M^\star}(\tau) \right|
\le &  \Ocal\pa{\sqrt{\frac{ \log \frac{|\Mcal|}{\delta}}{n}+\epsapprox}}:=\widetilde{\varepsilon}_{\textrm{MLE}}.
\end{align*}
Therefore, for any $h \in [H-1]$, the following inequality holds,
\begin{align*}
\sum_{\tau_h,f_{h+1}} \left| 
\P_{\wh{M}}^{\pi_b}(\tau_h,f_{h+1}) - \P^{\pi_b}_{M^*}(\tau_h,f_{h+1})\right| &\le \widetilde{\varepsilon}_{\textrm{MLE}}.
\end{align*}
The remaining proof is similar to the proof of Theorem \ref{thm:multi}. Finally, we get 
\begin{align*}
\left|J(\pi_e)- J_{\wh{M}}(\pi_e)\right| \le H \cfm \epsmlet + 2H^2 \cfm^2 \offm \epsmlet.
\end{align*}
Substituting $\epsmlet$ finishes the proof.
\end{proof}

\section{Tigher Bound for Memoryless $\pi_e$}
The paper mostly focuses on evaluation history-dependent target policies $\pi_e$ in the previous section. Here, we show that we can obtain a tighter coverage coefficient for memoryless $\pi_e$. We present the results for multi-step outcome revealing, with the single-step case being similar. 
\begin{theorem}\label{thm:multi_memoryless}
Under the same condition as Theorem \ref{thm:multi}, and assuming $\pi_e$ is memoryless, with probability at least $1-\delta$, we have
\begin{align*}
\left|J(\pi_e)-J_{\wh{M}}(\pi_e)\right| \le \Ocal\pa{H^2 \cfm^2 \offt \sqrt{\frac{\log \frac{|\Mcal|}{\delta}}{n}}}.
\end{align*}
where 
\begin{align*}
\offt:=\max_{h \in [H-1]} \frac{\sum_{o_h,a_h} \sum_{l=1}^{|\Fcal_{h+1}|}\left|\E_{\pi_e}\bra{\pi_e(a_h \mid \tau_{h-1},o_h)\bra{(\wh{\B}_h-\B_h)\outmat}_{l,:} \bfb_{\Scal}(\tau_{h-1})}\right|}{\sum_{o_h,a_h}\E_{\pi_b}\bra{\pi_b(a_h \mid \tau_{h-1},o_h)\|(\wh{\B}_h-\B_h)\outmat \bfb_{\Scal}(\tau_{h-1})\|_1}}.
\end{align*}
\end{theorem}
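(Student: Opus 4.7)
The plan is to mirror the proof of Theorem \ref{thm:multi} in Appendix \ref{app:proof_multi}, modifying only the step that generates the coverage coefficient. The initial per-step decomposition of $|J(\pi_e) - J_{\wh M}(\pi_e)|$, the telescoping identity (Eq.~\eqref{eq:tele}), the handling of the $\wh{\bfb}_0 - \bfb_0$ tail, and the single-step $\pi_b$-side bound in Eq.~\eqref{eq:pib_multi} all carry over verbatim. The only modification occurs in bounding, for each fixed $h$ and $j < h$, the quantity $\left\|\sum_{\tau_j} \pi_e(\tau_j) \mathbf{P}_{\tau_j} \whmpsj (\Bhat_j-\B_j)\bfb(\tau_{j-1})\right\|_1$, which Theorem \ref{thm:multi} bounds by $\cfm \sum_{\tau_j} \pi_e(\tau_j) \|(\Bhat_j-\B_j)\bfb(\tau_{j-1})\|_1$ and eventually turns into $\offm$.

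The key observation is that for memoryless $\pi_e$, the matrix $\mathbf{P}_{\tau_j} = \sum_{\tau_{h-1:j+1}} \pi_e(\tau_{h-1:j+1} \mid \tau_j)\, \Bhat_{h-1:j+1}\, \whoutmatj$ is \emph{independent of $\tau_j$}: the weight $\pi_e(\tau_{h-1:j+1} \mid \tau_j) = \prod_{j'=j+1}^{h-1} \pi_e(a_{j'} \mid o_{j'})$ does not reference the history, $\whoutmatj$ has no $\tau_j$ dependence, and the operators $\Bhat_{j'}$ only involve the variables being summed out. Writing $\mathbf{P}$ for this common matrix, I can pull $\mathbf{P}\whmpsj$ out of the $\tau_j$-sum, obtaining $\mathbf{P}\whmpsj \sum_{\tau_j} \pi_e(\tau_j)(\Bhat_j-\B_j)\bfb(\tau_{j-1})$. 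Applying $\|\mathbf{P}\|_1 = 1$ and $\|\whmpsj\|_1 \le \cfm$ and expanding $\|\cdot\|_1$ coordinate-wise yields $\cfm \sum_l \bigl|\sum_{\tau_j} \pi_e(\tau_j) [(\Bhat_j-\B_j)\bfb(\tau_{j-1})]_l\bigr|$, where---crucially---the sum over $\tau_j$ now sits \emph{inside} the absolute value. This is precisely the place where Theorem \ref{thm:multi}'s proof is forced to apply $\|\cdot\|_1$ inside $\sum_{\tau_j}$ (since $\mathbf{P}_{\tau_j}$ depends on $\tau_j$ in general), thereby losing the tighter cancellation.

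To match the form of $\offt$, I would then substitute $\bfb(\tau_{j-1}) = \P_{M^\star}(\tau_{j-1}) \outmat \bfb_{\Scal}(\tau_{j-1})$, factor $\pi_e(\tau_j) = \pi_e(\tau_{j-1}) \pi_e(a_j \mid \tau_{j-1}, o_j)$, and use a single triangle inequality to move $\sum_{o_j, a_j}$ outside the absolute value---needed because each $(o_j, a_j)$ parameterizes a distinct operator $\Bhat_j(o_j, a_j)$, so the sum cannot cleanly remain inside. The numerator of $\offt$ appears exactly. Dividing by the $\pi_b$-side quantity bounded by Eq.~\eqref{eq:pib_multi} (still $\le 2\cfm\epsmle$), and chaining through $h \in [H]$ and $j \in [h-1]$, yields $|J(\pi_e) - J_{\wh M}(\pi_e)| \le \Ocal\bigl(H\cfm\epsmle + H^2 \cfm^2 \offt \epsmle\bigr)$ with $\epsmle = \Ocal(\sqrt{\log(|\Mcal|/\delta)/n})$ from Lemma \ref{lem:mle}, giving the claimed rate.

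The main (and essentially only) obstacle is verifying the commutativity step: that $\mathbf{P}_{\tau_j}$ is genuinely $\tau_j$-free under memoryless $\pi_e$. This is a direct inspection of the defining formula and rests on the fact that $\pi_e$ never references $\tau_j$; one must just be careful that the summation index $\tau_{h-1:j+1}$ is disjoint from $\tau_j$ and that $\whoutmatj$ (a fixed MLE-chosen object) does not sneak in a dependence. Once that check is done, the rest of the argument is a parallel repetition of the multi-step proof, with no additional technical difficulty anticipated.
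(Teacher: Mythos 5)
Your proposal is correct and follows essentially the same route as the paper's own proof: the key step in both is observing that $\mathbf{P}_{\tau_j}$ is independent of $\tau_j$ when $\pi_e$ is memoryless, which lets the $\tau_j$-sum stay inside the $\|\cdot\|_1$ before invoking $\|\mathbf{P}\|_1 = 1$ and $\|\whmpsj\|_1 \le \cfm$, after which the belief-state substitution and the ratio against the $\pi_b$-side error from Eq.~\eqref{eq:pib_multi} produce $\offt$ exactly as in Appendix~F. If anything, your treatment is slightly more careful than the paper's at the final step, where moving $\sum_{o_h,a_h}$ outside the norm is indeed a triangle inequality (in the benign direction) rather than the equality the paper writes.
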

Note that the numerator of coverage coefficient $\offm$ in Theorem \ref{thm:multi} can be written as:
$$
\sum_{o_h,a_h} \sum_{l=1}^{|\Fcal_{h+1}|}\E_{\pi_e}\left|\pi_e(a_h \mid \tau_{h-1},o_h)\bra{(\wh{\B}_h-\B_h)\outmat}_{l,:} \bfb_{\Scal}(\tau_{h-1})\right|.
$$
Compared to $\offm$, the numerator in $\offt$ places the absolute operator outside of the expectation, resulting in a tighter coverage measurement. Mathematically similar tighter coverage coefficients have also been discovered in offline linear MDPs or offline MDPs with general function approximation. We refer readers to \citet{jiang2024offline} for more details.

\begin{proof}[Proof of Theorem \ref{thm:multi_memoryless}]
The proof is the same as for Theorem \ref{thm:multi} up to bounding Eq.~\eqref{eq:ope_error_multi}. For memoryless $\pi_e$, we observe that $\mathbf{P}_{\tau_j}$ is the same across different $\tau_j$. Therefore, we upper bound Eq.~\eqref{eq:ope_error_multi} as
\begin{align*}
&~\left\|\sum_{j=1}^{h-1}  \sum_{\tau_j} \pi_e(\tau_{j}) \mathbf{P}_{\tau_j} \whmpsj (\Bhat_j-\B_j)\bfb(\tau_{j-1})\right\|_1 \\
\le&~\cfm \sum_{j=1}^{h-1}\left\|\sum_{\tau_j} \pi_e(\tau_j) (\Bhat_j-\B_j)\bfb(\tau_{j-1})\right\|_1.
\end{align*}
Taking summation over $h$ from $1$ to $H$, we get
\begin{align}\label{eq:ope_e3}
\left|J(\pi_e)- J_{\wh{M}}(\pi_e)\right| \le H \cfm \epsmle+H \cfm \sum_{j=1}^{H-1} \left\| \sum_{\tau_j} \pi_e(\tau_j) (\Bhat_j-\B_j)\bfb(\tau_{j-1})\right\|_1.
\end{align}
As before, we consider the ratio between estimation error of $\pi_e$ and estimation error of $\pi_b$ for step $h \in [H-1]$:
\begin{align*}
&~\frac{\left\|\sum_{\tau_h} \pi_e(\tau_{h}) \times (\Bhat_h-\B_h)\bfb(\tau_{h-1})\right\|_1}{\sum_{\tau_h} \pi_b(\tau_h) \times 
	\left\| \left(\wh{\B}_h - \B_h\right)\bfb(\tau_{h-1})\right\|_1} \\
=&~\frac{\left\|\sum_{\tau_{h}} \pi_e(\tau_{h}) \P_{M^*}(\tau_{h-1}) \times (\Bhat_h-\B_h)\outmat\bfb_{\Scal}(\tau_{h-1})\right\|_1}{\sum_{\tau_{h}} \pi_b(\tau_{h}) \P_{M^*}(\tau_{h-1}) \times \left\|(\Bhat_h-\B_h)\outmat\bfb_{\Scal}(\tau_{h-1})\right\|_1} \\
=&~\frac{\sum_{o_h,a_h} \sum_{l=1}^{|\Fcal_{h+1}|}\left|\E_{\pi_e}\bra{\pi_e(a_h \mid \tau_{h-1},o_h)\bra{(\wh{\B}_h-\B_h)\outmat}_{l,:} \bfb_{\Scal}(\tau_{h-1})}\right|}{\sum_{o_h,a_h}\E_{\pi_b}\bra{\pi_b(a_h \mid \tau_{h-1},o_h)\|(\wh{\B}_h-\B_h)\outmat \bfb_{\Scal}(\tau_{h-1})\|_1}}.
\end{align*}
Combining it with Eq.~\eqref{eq:ope_e3}, we have
\begin{align*}
\left|J(\pi_e)- J_{\wh{M}}(\pi_e)\right| \le H \cfm \epsmle + 2H^2 \cfm^2 \offt \epsmle,
\end{align*}
where 
\begin{align*}
\offt:=\max_{h \in [H-1]} \frac{\sum_{o_h,a_h} \sum_{l=1}^{|\Fcal_{h+1}|}\left|\E_{\pi_e}\bra{\pi_e(a_h \mid \tau_{h-1},o_h)\bra{(\wh{\B}_h-\B_h)\outmat}_{l,:} \bfb_{\Scal}(\tau_{h-1})}\right|}{\sum_{o_h,a_h}\E_{\pi_b}\bra{\pi_b(a_h \mid \tau_{h-1},o_h)\|(\wh{\B}_h-\B_h)\outmat \bfb_{\Scal}(\tau_{h-1})\|_1}}.
\end{align*}
The proof is completed.
\end{proof}
\section{Comparison between Single-Step and Multi-Step Outcome Revealing}\label{app:single_vs_multi}

In the main text, we claim that multi-step outcome revealing (Assumption~\ref{assum:multi}) is a more lenient assumption than its single-step counterpart (Assumption~\ref{assum:single}), and provide intuitions based on the confusion-matrix interpretation of $\Sigma_{\Fcal,h}$ and $\Sigma_{\Ocal,h}$. However, a rigorous quantitative argument is missing, which we discuss in this section. 
Ideally, what we want to show is that $\cfm \le \cf$ (assuming both upper bounds are tight); if so, bounded $\cf$ (Assumption~\ref{assum:single}) would immediately imply the same bound on $\cfm$ (Assumption~\ref{assum:multi}), showing that the latter is a weaker assumption. 

Below we first show in Appendix~\ref{app:single_vs_multi_memoryless} that this can be proved up to a factor of $|\Scal_h|$ when $\pi_b$ is memoryless; the additional factor is due to the use of matrix 1-norm. For the more general case where $\pi_b$ can be history-dependent, the analysis for memoryless $\pi_b$ breaks down, which reveals some unnaturalness in the way we define  $\Sigma_{\Fcal,h}$ and  $\Sigma_{\Ocal,h}$ (which are inherited from prior works). We argue in Appendix~\ref{app:single_vs_multi_history} that we can re-define $\Sigma_{\Fcal,h}$ and  $\Sigma_{\Ocal,h}$ in a more natural manner that accounts for the latent state distribution under $\pi_b$, which will allow for a quantitative comparison between $\Sigma_{\Fcal,h}$ and  $\Sigma_{\Ocal,h}$; furthermore, all the results in the main paper hold up to minor changes under the new definitions.

\subsection{Memoryless $\pi_b$} \label{app:single_vs_multi_memoryless} 
We first compare $\cfm$ and $\cf$ quantitatively  assuming memoryless $\pi_b$. We can express $\Sigma_{\Fcal,h}$ as:
\[
\Sigma_{\Fcal,h} = |\Scal_h| \sum_{f_h} \bar{z}(f_h) \bbfu(f_h) \bbfu(f_h)^\top,
\]
where $\bar{z}(f_h)=\frac{z(f_h)}{|\Scal_h|}$ is the probability of observing $f_h$ when $s_h$ is uniformly distributed. Moreover, define $\Pr_{\pi_b}'(\cdot)$ as a joint distribution over $s_h$ and $f_h$, where $s_h$ is uniformly sampled, and $f_h$ is rolled out from $s_h$ using $\pi_b$ (note that this distribution is only well-defined for memoryless $\pi_b$), and we have
\[
[\bbfu(f_h)]_i = \frac{[\mathbf{u}(f_h)]_i}{z(f_h)} = \Prr_{\pi_b}'(s_h = i \mid f_h).
\]
Here, $\bbfu$ serves as an inverse belief state vector, predicting the current state based on future trajectory rather than the history. Similarly, $\Sigma_{\Ocal,h}$ can be written as:
\begin{align*}
\Sigma_{\Ocal,h}=|\Scal_h| \sum_{o_h} \bar w(o_h) \bbfu(o_h) \bbfu(o_h)^\top,
\end{align*}
where $\bar w(o_h)=\frac{w(o_h)}{|\Scal_h|}$ represents the probability of observing $o_h$ when $s_h$ is uniformly distributed and $[\bbfu(o_h)]_i=\Pr_{\pi_b}'(s_h=i \mid o_h)$. (Note that $\bbfu(o_h)$ has no dependence on $\pi_b$ since variables after $a_h$ are not involved, but the distribution of variables is consistent with $\Pr_{\pi_b}'$.) Next, we show that $\|\Sigma_{\Fcal,h}\|_2 \ge \|\Sigma_{\Ocal,h}\|_2$. For any vector $\bfa \in \mathbb{R}^{|\Scal_h|}$, we have
\begin{align*}
\bfa^\top \Sigma_{\Ocal,h} \bfa &= |\Scal_h| \E_{o_h}\bra{(\bfa^\top \bbfu(o_h))^2} \\
&= |\Scal_h| \E_{o_h}\bra{\pa{\E_{f_h \mid o_h}[\bfa^\top \bbfu(f_h)]}^2} \\
&\le |\Scal_h| \E_{o_h} \E_{f_h \mid o_h}\bra{\pa{\bfa^\top \bbfu(f_h)}^2} \\
&=\bfa^\top \SigF \bfa.
\end{align*}
The inequality is from Jensen's inequality. Therefore, we have:
$$
\|\SigF^{-1}\|_1 \le \sqrt{|\Scal_h|}\|\SigF^{-1}\|_2 \le \sqrt{|\Scal_h|} \|\SigO^{-1}\|_2 \le |\Scal_h| \|\SigO^{-1}\|_1.
$$
This implies whenever single-step outcome revealing assumption is satisfied with bound $\cf$, the multi-step revealing assumption also holds with $\cfm \le \max_{h}|\Scal_h| \cf$.

\subsection{History-dependent $\pi_b$} \label{app:single_vs_multi_history} 

Unfortunately, the above analysis only works for memoryless $\pi_b$, and breaks down if $\pi_b$ is history-dependent: a key step was to interpret $\bbfu(f_h)$ as $[\bbfu(f_h)]_i=\Pr_{\pi_b}'(s_h=i \mid f_h)$, which is only possible when $\pi_b$ is memoryless. The root problem is that we want to take 
$[\mathbf{u}(f_h)]_i=\P^{\pi_b}(f_h \mid s_h=i)$ and use Bayes rule to convert it to the posterior over $s_h$ given $f_h$, so that we can interpret $\SigF$ as the confusion matrix of predicting $s_h$ from $f_h$. Since the distribution is under $\pi_b$, it implicitly defines $d_h^{\pi_b}$ as the ``label prior'' for $s_h$, but our previous definitions enforce  an unnatural and arbitrary uniform prior over $s_h$ (which stems from the $\mathbf{1}_{\Scal_h}$ in $Z_h= \diag(\outmat \mathbf{1}_{\Scal_h})$). 

To fix this, we show that we can re-define $\Sigma_{\Fcal, h}$ and $\Sigma_{\Ocal, h}$ in a way that is more natural and respects the $d_h^{\pi_b}$ prior for latent states, and all results in the main text hold up to minor changes, as will be explained. 
%
Concretely, let $\bfp_h = d_h^{\pi_b}$ for concision. We introduce a new weight matrix $Z_h^{\bfp_h} := \diag(U_{\Fcal,h} \bfp_h)$. Under this formulation, we have $Z_h^{\bfp_h}(f_h) = \Pr_{\pi_b}(f_h)$, which corresponds to the marginal probability of $f_h$ under $\pi_b$. With this adjustment, we propose the following new assumption.

\begin{assumption}[Multi-Step Outcome Revealing with Memory-Based Behavior Policies] \label{assum:new_multi}
Define
\begin{align*} 
\tsigfh := \diag(\bfp_h)\outmat^\top (Z_h^{\bfp_h})^{-1} \outmat. 
\end{align*} 
We assume that $\|(\tsigfh)^{-1}\|_1 \le \tcf, ~\forall h \in [H-1]$ for some $\tcf < \infty$.
\end{assumption}

Here, $\tsigfh$ can be interpreted as the confusion matrix of latent states $s_h$ with respect to the following process: we first sample $f_h$ according to its marginal probability under $\pi_b$, $Z_h^{\bfp_h}(f_h) = \Pr_{\pi_b}(f_h)$. Then, conditioned on $f_h$, we independently sample two latent states, $s_h$ and $s'_h$, both from $\Pr_{\pi_b}(\cdot \mid f_h)$. Note that the joint distribution of $(s_h, f_h)$ in this process is consistent with that under $\pi_b$, so we abuse the notation $\Pr_{\pi_b}(\cdot)$ to refer to this distribution (which is augmented with a $s'_h$ variable). The $(i,j)$-th entry of $\tsigfh$ corresponds to the probability $\Pr_{\pi_b}(s'_h = i \mid s_h = j)$ since
\begin{align*}
[\tsigfh]_{ij} = \sum_{k} \Prr_{\pi_b}(s'_h = i \mid f_h = k) \Prr_{\pi_b}(f_h =k \mid s_h = j) = \Prr_{\pi_b}(s'_h = i \mid s_h = j).
\end{align*}
This holds due to the conditional independence between $s_h$ and $s_h'$ given $f_h$, as defined by the sampling process. 
Similar to the properties of $\SigF$, when $f_h$ deterministically predicts $s_h$, we have $\tsigfh = \mathbf{I}$. This interpretation justifies the boundedness of $\|(\tsigfh)^{-1}\|_1$ as an assumption. In fact, when $\bfp_h$ is uniform, we have $\tsigfh = \SigF$; when $\bfp_h$ is not uniform, $\tsigfh$ is more natural as it does not artificially and arbitrarily inject the uniform prior into the definition. 

\paragraph{Comparison between Single-step and Multi-step Revealing} For the single-step outcome revealing assumption, we can similarly replace $\SigO$ with 
$\Sigma_{\Ocal,h}^{\bfp_h} := \diag(\bfp_h) \mathbb{O}_h^\top (W_h^{\bfp_h})^{-1} \mathbb{O}_h$, where $
W_h^{\bfp_h} := \diag(\mathbb{O}_h \bfp_h)$. The comparison between $\|(\tsigfh)^{-1}\|_1$ and $\|(\Sigma_{\Ocal,h}^{\bfp_h})^{-1}\|_1$ follows a similar analysis as Appendix~\ref{app:single_vs_multi_memoryless}, where $\Pr_{\pi_b}'(\cdot)$ will be replaced with the more natural $\Pr_{\pi_b}(\cdot)$.

\paragraph{Changes to the Main Analyses} We now show that the main results of our work are mostly unaffected if we switch to the new assumptions. Taking the upper-bound analysis (e.g., Theorem~\ref{thm:multi}) as an example, $\cfm$ enters the analysis through bounding the norm of the OOM parameterization of the model (see e.g., Lemma~\ref{lem:op_multi}). With the new definition of $\tsigfh$, all we need is to change the weighted pseudo-inverse used in the OOM representation accordingly: let
\[
U_{\Fcal,h}^{\dagger,\bfp_h} = (\tsigfh)^{-1} \diag(\bfp_h) U_{\Fcal,h}^\top (Z_h^{\bfp_h})^{-1},
\]
which leads to the new operator:
\[
\tB_h(o,a) = \outmatp \T_{h,a} \diag(\OO_h(o \mid \cdot)) U_{\Fcal,h}^{\dagger,\bfp_h}.
\]
Since
\[
\|U_{\Fcal,h}^{\dagger,\bfp_h}\|_1 \leq \|(\tsigfh)^{-1}\|_1 \|\diag(\bfp_h) U_{\Fcal,h}^\top (Z_h^{\bfp_h})^{-1}\|_1 \leq \tcf,
\]
one can verify that Lemma~\ref{lem:op_multi} still holds for $\tB_h$ with $C_{\Fcal}$ replaced by $\tcf$. The rest of the analysis follows the same as in Appendix~\ref{app:main_proof}. 

\paragraph{Future-dependent Value Function Construction} Besides our analyses, changes in the definitions of $\cfm$ will also affect the comparison with the model-free results from \citet{zhang2024curses} as cited in the first row of Table~\ref{tab:summary}. As it turns out, these results also hold under the new definition. In \citet{uehara2023future,zhang2024curses}, $\cfm$ is involved in their analysis when they use $\SigF$ to construct the future-dependent value function (FDVF) \citep{uehara2023future} and bound it range. Here we show that we  can also use the newly defined $\tsigfh$ to construct FDVF, whose range depends on $\tcf$, and the rest of their analyses still hold.

A FDVF $V_{\Fcal}$ satisfies that $\forall h$,
$$
U_{\Fcal,h}^\top \times V_{\Fcal,h} = V_{\Scal,h}^{\pi_e},
$$
where $V_{\Scal,h}^{\pi_e}$ is the latent-state value function of the memoryless evaluation policy $\pi_e$, i.e., the expected sum of rewards from step $h$ onwards conditioned on a latent state.  
Let $R^+(f_h):=\sum_{h'=h}^H R(o_{h'})$ be the Monte-Carlo return in $f_h$ and $Z_h^{R,\bfp_h}(f_h):=Z_h^{\bfp_h}(f_h) / R^+(f_h)$. We construct the FDVF as:
\begin{align*}
V_{\Fcal,h} =(\rz)^{-1}U_{\Fcal,h} \diag(\bfp_h) (\tsigfr)^{-\top} V_{\Scal,h}^{\pi_e}, ~~~ \textrm{where~}  \tsigfr := \diag(\bfp_h) U_{\Fcal,h}^\top (\rz)^{-1} U_{\Fcal,h}. 
\end{align*}
We can then make a similar outcome coverage assumption as in Assumption 9 of \citet{zhang2024curses}, ensuring the boundedness of $V_{\Fcal,h}$. For the on-policy case $\pi_b=\pi_e$, one can verify that the constructed $V_{\Fcal}$ exactly recovers $R^+$, which is naturally bounded by $H$.

\end{document}